\documentclass[11pt]{article}
\usepackage[margin=1in]{geometry}
\usepackage[utf8]{inputenc}

\usepackage[english]{babel}
\usepackage[T1]{fontenc}
\usepackage{amssymb,amsmath,amstext,amsfonts,amsthm,braket}
\usepackage{mathtools}
\usepackage{verbatim}
\usepackage{relsize}
\usepackage[colorlinks=true, urlcolor=blue, linkcolor=blue, citecolor=blue]{hyperref}
\usepackage{fancyvrb}
\usepackage{tikz-cd}
\usepackage[percent]{overpic}
\usepackage{comment}

\usepackage{algorithm, algorithmic,enumerate,booktabs}

\newcommand{\algorithmicbreak}{\textbf{break}}
\newcommand{\BREAK}{\STATE \algorithmicbreak}

\usepackage{todonotes}
\usepackage{xcolor}
\usepackage{cleveref}

\usepackage[normalem]{ulem}
\usepackage{blkarray}
\usepackage{subcaption}
\usepackage{enumitem}
\usepackage[authoryear]{natbib}

\theoremstyle{plain}
\newtheorem*{theorem*}{Theorem}
\newtheorem{theorem}{Theorem}
\numberwithin{theorem}{section}
\newtheorem{proposition}[theorem]{Proposition}
\newtheorem{lemma}[theorem]{Lemma}
\newtheorem{corollary}[theorem]{Corollary}

\theoremstyle{definition}
\newtheorem{definition}[theorem]{Definition}
\newtheorem{remark}[theorem]{Remark}
\newtheorem{example}[theorem]{Example}

\newtheorem{assumption}[theorem]{Assumption}

\newcommand{\pa}{\text{pa}}

\newcommand{\trek}{\text{trek}}

\newcommand{\wh}{\text{wh}}

\definecolor{MyBlue}{RGB}{0,101,189} 
\definecolor{MyRed}{RGB}{234, 114, 55} 
\definecolor{MyGreen}{RGB}{162,173,0}

\newcommand{\id}{\text{id}}
\newcommand{\rem}{\text{rem}}
\newcommand{\deh}{\text{deh}}

\newcommand{\footremember}[2]{
    \footnote{#2}
    \newcounter{#1}
    \setcounter{#1}{\value{footnote}}
}

\title{Efficient Symbolic Computations for Identifying Causal Effects}

\author{
Benjamin Hollering\footremember{ben}{Max Planck Institute for Mathematics in the Sciences, Germany; \href{mailto:ben.hollering@tum.de}{benjamin.hollering@mis.mpg.de}}  \and 
Pratik Misra\footremember{pratik}{Binghamton University, USA; \href{mailto:pratik.misra@tum.de}{pmisra@binghamton.edu}} \and
Nils Sturma\footremember{nils}{École Polytechnique Fédérale de Lausanne, Switzerland; \href{mailto:nils.sturma@epfl.ch}{nils.sturma@epfl.ch}}}

\date{ }

\begin{document}
\maketitle
\begin{abstract}
Determining identifiability of causal effects from observational data under latent confounding is a central challenge in causal inference. For linear structural causal models, identifiability of causal effects is decidable through symbolic computation. However, standard approaches based on Gröbner bases become computationally infeasible beyond small settings due to their doubly exponential complexity. In this work, we study how to practically use symbolic computation for deciding rational identifiability. In particular, we present an efficient algorithm that provably finds the lowest degree identifying formulas. For a causal effect of interest, if there exists an identification formula of a prespecified maximal degree, our algorithm returns such a formula in  quasi-polynomial time.
\end{abstract}

\section{Introduction}
Identifiability of causal effects refers to studying whether it is feasible to infer cause-effect relationships under clearly detailed assumptions about the data-generating process. Determining whether causal effects are identifiable is crucial for any downstream task, such as robust estimation of effects or generalization across environments. The main challenge in identifying causal relations is the ubiquitous presence of latent (unobserved) variables, so we only observe marginal distributions. One of the most popular tools that allow us to argue about causal relationships are structural causal models \citep{spirtes2000causation, pearl2009causality}. Applied sciences widely make use of \emph{linear} structural causal models, which are valued for their simple interpretation \citep{bollen1989structural}. Linear structural causal models were introduced by \citet{wright1921correlation, wright1934method} and are also referred to as path diagrams. The precise setting is described as follows. Let $X=(X_v)_{v \in V}$ be a random vector that is indexed by a finite set $V$. Each linear causal model is defined  by a directed graph $G=(V, D, B)$, where the nodes $V$ correspond to the random variables, the directed edges $D \subseteq V \times V$ encode the causal relationships, and the bidirected edges $B \subseteq V \times V$ encode latent confounding. Note that $(v,w) \in B$ if and only $(w,v) \in B$. We suppose that all variables are related by noisy linear equations, that is, 
\begin{equation} \label{eq:structural-equation}
    X_v = \sum_{w \in \pa(v)} \lambda_{wv} X_w + \varepsilon_v,
\end{equation}
where $\pa(v)$ is the of parents of $v$ in the graph $G$, and $\varepsilon_v$ are stochastic noise variables with mean zero and finite variance. The distributional assumption is that, for $v \neq w$, the variables $\varepsilon_v$ and $\varepsilon_w$ are independent whenever there is no bidirected edge between $v$ and $w$. Now, the question of identification of a \emph{direct causal effect} $\lambda_{wv}$ refers to whether it is possible to recover $\lambda_{wv}$ from the covariance matrix of the observable random vector $X$. Writing the structural equations~\eqref{eq:structural-equation} in vector form, we get that
\[
    X = \Lambda^{\top} X + \varepsilon,
\]
where $\varepsilon = (\varepsilon_v)_{v \in V}$ is the noise vector and $\Lambda$ is the matrix of direct causal effects, that is, $\Lambda_{wv} = \lambda_{wv}$ whenever there is a directed edge from $w$ to $v$ in $G$ and  $\Lambda_{wv}=0$ else. By solving for $X$, we find that $X = (I - \Lambda)^{-\top} \varepsilon$ and therefore the covariance matrix is given by
\begin{equation} \label{eq:cov}
    \Sigma := \text{Var}[X] = (I - \Lambda)^{-\top} \Omega (I - \Lambda)^{-1},
\end{equation}
where $\Omega =(\omega_{vw}) = \text{Var}[\varepsilon]$. If it is possible to recover $\Lambda$ from the covariance matrix $\Sigma=(\sigma_{vw})$ via rational formulas in the entries of $\Sigma$, then the graph $G$ is said to be \emph{rationally identifiable}; we refer to Section~\ref{sec:algebraic-setup} for a precise definition.

\begin{example}
Consider a conditionally randomized study designed to evaluate the effect of a treatment on an outcome $Y$. Let $T$ denote the assigned treatment dose, which is randomized conditional on a baseline covariate  $L$. Since not all individuals adhere to their assigned treatment, we introduce $A$, representing the treatment actually received; see \citet[Section 16]{hernan2025causal} for background on adherence. The primary interest then is in the direct causal effect of the treatment actually taken $A$ on the outcome $Y$. In practice, an investigator may wish to account for latent confounding that partially explains the association between $A$ and $Y$. The corresponding data-generating process of this study can be represented by the mixed graph in Figure~\ref{fig:intro-example}.
\end{example}

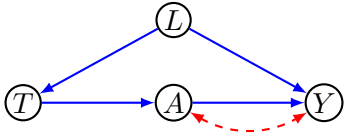
\begin{figure}[t]
\centering
\tikzset{
  every node/.style={circle, inner sep=0.3mm, minimum size=0.45cm, draw, thick, black, fill=white, text=black},
  every path/.style={thick}
}
\begin{tikzpicture}[align=center]
 \node[] (0) at (-2,1.1) {$L$};
  \node[] (1) at (-4,0) {$T$};
  \node[] (2) at (-2,0) {$A$};
  \node[] (3) at (-0,0) {$Y$};
  
  \draw[blue] [-latex] (0) edge (1);
  \draw[blue] [-latex] (0) edge (3);
  \draw[blue] [-latex] (2) edge (3);
   \draw[red, dashed, bend right] [latex-latex] (2) edge (3);
   \draw[blue] [-latex] (1) edge (2);
\end{tikzpicture}    
\caption{Mixed graph corresponding to a conditionally randomized trial with imperfect adherence.}
\label{fig:intro-example}  
\end{figure}

It was noted  by \citet{garcia2010identifying} that rational identifiability of a given graph is always decidable by computational algebraic geometry involving Gröbner basis computations. Indeed, if the graph is acyclic, each entry of the covariance matrix $\Sigma$ is a polynomial in the $\lambda_{vw}$ and $\omega_{ij}$ variables and thus deciding rational identifiability is identical to solving the polynomial system given in~\eqref{eq:cov}. However, the complexity of standard Gröbner basis methods can in the worst case be double exponential in the size of the graph \citep{mayr1997somcecomplexity, cox2015ideals}. Hence, they become infeasible even for small graphs on $5$ nodes. Recently, \citet{doerfler2024complexity} showed that rational identifiability can be decided in exponential running time. However, their algorithm is of a theoretical nature and it does not return the formulas for identification, which are of crucial interest in practice.  It is an open problem to \emph{graphically} characterize which graphs are rationally identifiable.  \\

Nevertheless, many graphical criteria have been given that are \emph{sufficient conditions} for rational identifiability. Crucially, they are efficient in the sense that they can be checked in polynomial time. They search for patterns in the covariance matrix and rely on maximum flow computations \citep{sullivant2010trek, cormen2009introduction}. The first sufficient condition was the instrumental variable criterion, see \citet{wright1928tariff} and \citet{bowden1984instrumental}. A major break through was the half trek criterion by \citet{foygel2012halftrek}. Improvements and follow-up works include criteria based on auxiliary variables (\citeauthor{chen2016incorporating}, \citeyear{chen2016incorporating} and \citeauthor{chen2017identification}, \citeyear{chen2017identification}), decomposition techniques \citep{tian2005identifying} and several generalizations and further developments, cf. \citet{tian2009parameter}, \citet{drton2016generic}, \citet{weihs2017determinantal}, \citet{kumor2019efficient}. To our knowledge, the most recent polynomial time sufficient condition for rational identifiability is the auxiliary cutset criterion by \citet{kumor2020efficient}. \\

In this paper, we take a different approach and remind ourselves that deciding rational identifiability corresponds to solving polynomial systems. Building on \citet{garcia2010identifying}, we study how symbolic computations based on Gröbner bases can be made practical for deciding rational identifiability. Roughly speaking, we apply two key ideas: First, we work with homogenized equations: A polynomial is homogeneous if all of its components are of the same degree. Second, we work degree by degree. That is, we always search for formulas with the lowest degree possible to identify a parameter. If no identifying formula is found up to the current degree bound, we increase the bound by one and repeat our computations. These two key ideas allow us to exploit the fact that degree-bounded Gröbner basis computations of homogeneous ideals are polynomial time. Moreover, to keep the degree during the computations as low as possible, we allow formulas that identify a new parameter to depend on the already identified parameters. \citet{abbott2017implicitization} considered a similar approach for efficiently computing implicit descriptions of hypersurfaces, a well-known challenge in the computational algebra community. \\

If a graph $G$ is rationally identifiable, then we denote by $\text{ID}_G$ the maximal degree of the polynomials appearing in the identifying formulas, when choosing each identifying formula with the lowest degree possible. Our main algorithm takes as input a mixed graph and a maximal degree $d$.  A shortened version of our main result then reads as follows.

\begin{theorem} \label{thm:main-result-intro}
Let $G=(V,D,B)$ be an acyclic mixed graph. If $G$ is rationally identifiable  and $d \geq \text{ID}_G$,  then  Algorithm~\ref{alg:degree-bounded-identification}  returns ``yes''. If $G$ is not rationally identifiable, then Algorithm~\ref{alg:degree-bounded-identification} returns ``no'' for all input degrees $d \in \mathbb{N}$. The computational complexity of Algorithm~\ref{alg:degree-bounded-identification} is of order 
\[
O\left( d |V|^5 \{3d|V|\}^{4\alpha d^2\log(3d|V|)}\right),
\]
where $\alpha$ is a constant such that row reduction of a $n \times n$ matrix can be performed in $O(n^{\alpha})$ operations. 
\end{theorem}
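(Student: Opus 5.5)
We have not yet seen the algorithm itself, so the plan assumes the structure sketched in the introduction. Algorithm~\ref{alg:degree-bounded-identification} processes the edges (equivalently, the parameters $\lambda_{wv}$) iteratively. For each not-yet-identified $\lambda_{wv}$ it forms the homogenized ideal. This ideal is generated by the equations $\sigma_{ij}-\big((I-\Lambda)^{-\top}\Omega(I-\Lambda)^{-1}\big)_{ij}$, together with an auxiliary homogenizing variable. The algorithm then computes a degree-$d$ truncated Gröbner basis of its elimination ideal onto the variables $\sigma$, the previously identified $\lambda$'s and the target $\lambda_{wv}$. If this truncated basis contains an element that is linear in $\lambda_{wv}$, say $f_1\lambda_{wv}-f_0$ with $f_1$ not vanishing on the model, the parameter is declared identified. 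The algorithm returns ``yes'' exactly when every parameter is identified in this way.

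\textbf{Correctness.} The ``no'' direction should be the easy one. Whenever the algorithm certifies a parameter, the certificate $f_1\lambda_{wv}=f_0$ lies in the vanishing ideal of the parametrization, and its leading coefficient is nonzero generically. Hence $\lambda_{wv}=f_0/f_1$ is a valid rational formula in $\Sigma$ once the earlier $\lambda$'s are substituted by their own rational formulas, by induction on the order of identification. If every parameter is certified, $G$ is therefore rationally identifiable. Taking the contrapositive, a non-identifiable graph yields ``no'' for every $d$.

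For the ``yes'' direction, I would use the characterization, following \citet{garcia2010identifying}, that $\lambda_{wv}$ is rationally identifiable if and only if the elimination ideal contains a polynomial of degree one in $\lambda_{wv}$ with a leading coefficient not in the ideal. By definition of $\text{ID}_G$, such a polynomial exists with total degree at most $\text{ID}_G\le d$, possibly using previously identified parameters. The key lemma is then that, for a homogeneous ideal and a block (elimination) monomial order, a truncated Gröbner basis up to degree $d$ agrees with the full Gröbner basis in all degrees $\le d$. Consequently the truncated elimination ideal contains every element of degree $\le d$, and the algorithm finds the certificate. Homogenization is exactly what makes this truncation valid. I would also check that dehomogenizing preserves the degree-one-in-$\lambda_{wv}$ property, and that the order in which parameters are processed does not matter, since identifiability is monotone as more parameters become known.

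\textbf{Complexity.} I would bound the cost via Macaulay matrices. There are about $n=O(|V|^2)$ variables ($\sigma,\lambda,\omega$). Because the graph is acyclic, the generators have degree at most $|V|$ after homogenization; I suspect this, with the factor $d$, is the source of the $3d|V|$ term. The number of monomials of degree $\le D$ in $n$ variables is $\binom{n+D}{D}\le (n+D)^{\min(n,D)}$. The part I expect to be hardest, and the source of the quasi-polynomial exponent $4\alpha d^2\log(3d|V|)$, is showing that the relevant truncated computation only involves degrees and monomial supports controlled by $d$ rather than by $n$. The plan here is to restrict attention to the variables that can appear in degree-$\le d$ certificates, and to bound the Macaulay matrix size by $(3d|V|)^{O(d^2)}$. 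Row reduction of this matrix costs its size to the power $\alpha$. Multiplying by $O(|V|^2)$ parameters, by up to $d$ degree iterations, and by the polynomial overhead of assembling the generators gives the stated $O\big(d|V|^5\{3d|V|\}^{4\alpha d^2\log(3d|V|)}\big)$. I would write the bound as $e^{\log(\cdot)}$ to match the quasi-polynomial form.
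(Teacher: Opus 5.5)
Your soundness argument for the ``no'' direction is fine. Your remark that a certificate relative to fewer identified parameters remains one relative to more is a step the paper leaves implicit. The genuine gap is in the ``yes'' direction, and it carries over into the complexity argument.

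You homogenize the generators $\sigma_{ij}-\tau_G(\sigma_{ij})$ in the ordinary grading. You then assert that the basis truncated at degree $d$ contains every element of degree at most $d$ of the homogenized ideal, in particular the homogenization of a certificate of degree at most $\text{ID}_G$. But $\sigma_{ij}$ has degree $1$ while $\tau_G(\sigma_{ij})$ has degree up to $w_{\trek}\le 2|V|-1$. The ideal $J$ generated by the homogenized generators is therefore in general strictly smaller than the homogenization of $\mathcal{I}$. Recovering the latter in general requires saturating by $h$ or a graded Gröbner basis of $\mathcal{I}$, which is exactly the expensive computation one wants to avoid.

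The instrumental-variable graph of Figure~\ref{fig:instrumental-variable} shows this. There $\lambda_{23}\sigma_{12}-\sigma_{13}\in\mathcal{I}$ has degree $2$, yet $\lambda_{23}\sigma_{12}-h\sigma_{13}\notin J$. The degree-$2$ part of $J$ is spanned by the products $x(\sigma_{11}-\omega_{11})$, with $x$ a variable, and by $h\sigma_{12}-\omega_{11}\lambda_{12}$, and none of these contains the monomial $\lambda_{23}\sigma_{12}$. Only the multiple $h(\lambda_{23}\sigma_{12}-h\sigma_{13})$ lies in $J$.

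The paper's fix is the trek weighting $w(\sigma_{uv})=\deg\tau_G(\sigma_{uv})$. Under it, $\sigma_{uv}-\tau_G(\sigma_{uv})^{\wh}$ is $w$-homogeneous with leading term $\sigma_{uv}$ under a $w$-graded elimination order for $\sigma$. These leading terms are pairwise coprime, so the generators form a Gröbner basis and generate $\mathcal{I}^{\wh}$ (Proposition~\ref{prop:hom-equal}). The price is that a certificate of ordinary degree at most $\text{ID}_G$ can have weighted degree up to $\text{ID}_G\cdot w_{\trek}$ (Corollary~\ref{cor:degree}); in the example it is $3>2$. This is why Algorithm~\ref{alg:degree-bounded-identification} truncates at $d'=d\,w_{\trek}$, not at $d$.

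The same point explains why your complexity plan cannot reach the stated bound. If truncation at degree $d$ in $O(|V|^2)$ variables were valid, the Macaulay matrices would have $(3d|V|)^{O(d)}$ columns and the algorithm would run in polynomial time. That would leave no source for the exponent $4\alpha d^2\log(3d|V|)$. Also, a Gröbner basis computation cannot be restricted in advance to ``the variables that can appear in certificates''.

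In the paper, the truncation degree is $d\,w_{\trek}$ with at most $2|V|^2$ variables. The factor $\binom{2|V|^2+d w_{\trek}-1}{d w_{\trek}}^{\alpha}$ from Proposition~\ref{prop:deg-bounded-gb-complexity} is then $O\bigl((3d|V|)^{2\alpha d\,w_{\trek}}\bigr)$, at least exponential in $|V|$ once $w_{\trek}$ grows linearly. Two things control it:
\begin{itemize}
\item the weight denominator satisfies $(\prod_{u\le v}w(\sigma_{uv}))^{\alpha}\ge\text{sf}(w_{\trek})^{\alpha/4}$ by Lemma~\ref{lem:superfactorial};
\item a case split on $w_{\trek}$. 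If $w_{\trek}\le 2d\log(3d|V|)$, the binomial factor is at most $(3d|V|)^{4\alpha d^2\log(3d|V|)}$; this is exactly where the $\log$ in the exponent comes from. Otherwise $\log\text{sf}(w_{\trek})\sim\tfrac{1}{2}w_{\trek}^2\log w_{\trek}$ dominates and each call costs $O(1)$ (Lemmas~\ref{lem:superfactorial-approx} and~\ref{lem:helper}).
\end{itemize}
Finally, the prefactor $d|V|^5$ is the number of Gröbner basis calls, $d\,w_{\trek}|\theta|^2$ with $|\theta|=O(|V|^2)$ and $w_{\trek}=O(|V|)$. It does not come from parameters times degrees times an assembly overhead, as your count suggests.
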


Said differently, if there exists an identification formula up to a pre-specified degree $d$, our algorithm finds it in quasi-polynomial time. Note that our algorithm also returns the symbolic formulas for identifying the direct causal effects. Moreover, our result implies that any identification method restricted to searching for formulas of bounded degree is subsumed by our approach: whenever such a formula exists, our method will also find it in quasi-polynomial time. \\

\begin{example}
We return to the earlier example of a conditionally randomized study with imperfect adherence, represented by the graph in Figure~\ref{fig:intro-example}. For this graph, all direct causal effects in the linear model~\eqref{eq:structural-equation} are rationally identifiable. If we apply the standard Gröbner basis methods as in \citet{garcia2010identifying}, we find the identification formulas
\[
    \lambda_{LT} = \frac{\sigma_{LT}}{\sigma_{LL}}, \quad 
    \lambda_{LY} = \frac{\sigma_{LY}}{\sigma_{LL}}, \quad 
    \lambda_{TA} = \frac{\sigma_{TA}}{\sigma_{TT}}, \quad 
     \lambda_{AY} = \frac{\sigma_{LT}\sigma_{LY} - \sigma_{LL}\sigma_{TY}}{\sigma_{LT}\sigma_{LA} - \sigma_{LL}\sigma_{TA}},
\]
where, for example, the causal effect $\lambda_{AY}$ corresponds to the directed edge $A \rightarrow Y$. However, the effect $\lambda_{AY}$ is also identified via the formula
\[
    \lambda_{AY} = \frac{\sigma_{TY} - \lambda_{LY}\sigma_{LT}}{\sigma_{TA}}
\]
whenever the effect $\lambda_{LY}$ is identified beforehand. 
By clearing the denominators of both identification formulas, we obtain two ``identifying polynomials'' for the effect $\lambda_{AY}$, which are as follows:
\[
    \lambda_{AY} (\sigma_{LT}\sigma_{LA} - \sigma_{LL}\sigma_{TA}) - \sigma_{LT}\sigma_{LY} + \sigma_{LL}\sigma_{TY} \quad \text{ and } \quad \lambda_{AY} \sigma_{TA} - \sigma_{TY} + \lambda_{LY}\sigma_{LT}.
\]
Note that the degree of the second polynomial is lower. Our algorithm finds identifying polynomials of the lowest possible degree, resulting in significantly more efficient computations. 
\end{example}

The organization of the paper is as follows. In Section~\ref{sec:algebraic-setup}, we introduce necessary tools and provide a definition of rational identifiability. In Section~\ref{sec:complexity}, we recall complexity results for Gröbner bases, and in Section~\ref{sec:weighting-and-homogenization} we establish how to check rational identifiability with homogeneous equations.  Based on this, we then present our main result and our identification algorithm in Section~\ref{sec:main-result}. In Section~\ref{sec:experiments}, we compare our algorithm with the standard symbolic algorithm  by \citet{garcia2010identifying} in numerical experiments. Finally, in Section~\ref{sec:comparison-to-existing-criteria}, we relate our algorithm to the polynomial-time sufficient criteria that exist in the literature. The Appendix contains the proofs of all results.

\section{Rational Identifiability} \label{sec:algebraic-setup} 

Let $G=(V,D,B)$ be a mixed graph, where $V=\{1, \ldots, p\}$ is the  set of nodes and $D, B \subseteq V \times V$ are two sets of edges. We say that an element $(v,w) \in D$ is a directed edge, and we represent it as $v \rightarrow w \in D$. We say that an element $(v,w) \in B$ is a bidirected edge, and we assume that bidirected edges have no orientation, that is, $(v,w) \in B$  if and only if $(w,v) \in B$. We represent a bidirected edge as $v \leftrightarrow w$. Neither the directed part nor the bidirected part contain self-loops, that is, $v \rightarrow v \not\in D$ and $v \leftrightarrow v \not\in B$ for all $v \in V$. In this paper, we restrict ourselves to acyclic mixed graphs, in which the directed parts do not contain cycles.

For the purpose of deciding rational identifiability, we may identify the linear structural equation model with a set of covariance matrices of the form~\eqref{eq:cov}. To formally define this, we first introduce the necessary notation, which is from \citet{foygel2012halftrek}. We write $\mathbb{R}^D$ for the set of real $p \times p$ matrices $\Lambda=(\lambda_{wv})$ with support $D$, that is $\lambda_{wv} = 0$ if $w \rightarrow v \not\in D$. If the graph $G$ is acyclic, then the matrix $I-\Lambda$ is invertible for all $\Lambda \in \mathbb{R}^D$, where $I$ denotes the $p \times p$ identity matrix. Finally, we write $\text{PD}(p)$ for the cone of positive definite $p \times p$ matrices $\Omega=(\omega_{wv})$ and we let $\text{PD}(B)$ be the subcone of matrices with support $B$, that is, $\omega_{wv} = 0$ if $w \leftrightarrow v \not\in B$. \\

\begin{definition} \label{def:linear-sem}
The linear structural equation model given by the acyclic mixed graph $G=(V,D,B)$  with $V=\{1, \ldots, p\}$ is the set of all $p \times p$ covariance matrices 
\[
    \Sigma = (I - \Lambda)^{-\top} \Omega  (I - \Lambda)^{-1}
\]
for $\Lambda \in \mathbb{R}^D$ and $\Omega \in \text{PD}(B)$.
\end{definition}

A linear structural equation model is identifiable if the parameter matrices $\Lambda \in \mathbb{R}^D$ and  $\Omega \in \text{PD}(B)$ can be uniquely recovered from $\Sigma$. In other words, identifiability holds if the parametrization
\begin{equation} \label{eq:param-map}
(\Lambda,\Omega)\mapsto (I - \Lambda)^{-\top} \Omega  (I - \Lambda)^{-1}
\end{equation}
is injective on the domain $  \mathbb{R}^D \times \text{PD}(B)$, or on a dense open subset. Since the parametrization is a rational function, the inverse, if it exists, is an algebraic function. Most interest in the literature is given to \emph{rational identifiability}, which refers to settings where the inverse is also given by a rational function. That is, each direct causal effect $\lambda_{wv}$ is identified by a rational formula, i.e., $\lambda_{wv} = b(\sigma)/a(\sigma)$, where $a$ and $b$ are polynomials in the entries of the covariance matrix $\Sigma=(\sigma_{ij})$, and where $a(\sigma)$ is not the zero polynomial.

We now explain how methods from computational algebra can be used for deciding whether a given graph allows for rational identifiability of the parameters. Let $\mathbb{R}[x_1,\ldots, x_m]$ be a polynomial ring, i.e., the set of all polynomials in the indeterminates $x_1, \ldots, x_m$ with real coefficients. In our setting, we work with indeterminates corresponding to the matrices $\Lambda, \Omega$ and $\Sigma$. To define this, let $\lambda = \{\lambda_{uv}: u \rightarrow v \in D\}$  be  indeterminates corresponding to the directed edges and let $\omega = \{\omega_{uv}: u \leftrightarrow v \in B, u < v\} \cup \{\omega_{vv}: v \in V\}$ be indeterminates corresponding to the bidirected edges. 
For convenience, we will denote by $\theta = \lambda \cup \omega$ the set all indeterminates corresponding to the directed and the bidirected edges. Moreover, let $\sigma=\{\sigma_{uv}: 1 \leq u \leq v \leq p\}$ be indeterminates representing the entries of the covariance matrix. From now on, we will consider the matrix $\Lambda$ as a matrix of symbolic indeterminates with entries $\Lambda_{uv} = \lambda_{uv}$ if $u \rightarrow v \in D$ and $\Lambda_{uv} = 0$ else. Similarly, the matrix $\Omega$ has entries $\Omega_{uv} = \Omega_{vu} = \omega_{uv}$ if $u \leftrightarrow v \in B$  or $u=v$, and $\Omega_{uv} = 0$ else. The matrix $\Sigma$ has entries $\Sigma_{uv}=\sigma_{uv}$ for $u \leq v$ and $\Sigma_{uv}=\sigma_{vu}$ else.
It is useful to  consider the mapping between polynomial rings
\begin{align*}
    \tau_G: \,\,\,\,\,\,\, \mathbb{R}[\sigma] &\longrightarrow \mathbb{R}[\theta] \\
    \sigma_{uv} &\longmapsto [(I-\Lambda)^{-\top} \Omega (I-\Lambda)^{-1}]_{uv},
\end{align*}
which is dual to the parametrization in~\eqref{eq:param-map}. Acyclicity of $G$ ensures that $\tau_G(\sigma_{uv})$ is a polynomial. This polynomial exhibits a nice combinatorial structure given by treks. A trek in a mixed graph $G$ is a walk whose consecutive edges do not have colliding arrowheads. In other words, a trek between nodes $v_l$ and $u_r$ is of one of the two following forms:
\[
    v_l\leftarrow \cdots \leftarrow v_1 \leftrightarrow u_1 \rightarrow \cdots  \rightarrow u_r ,
\]
or 
\[
    v_l\leftarrow \cdots \leftarrow v_1 = u_1 \rightarrow \cdots  \rightarrow u_r ,
\]
where $l,r \geq 1$. The nodes $v_1, \ldots, v_l$ are pairwise distinct and the nodes $u_1, \ldots, u_r$ are pairwise distinct, but the sets $\{v_1, \ldots, v_l\}$ and $\{u_1, \ldots, u_r\}$ are allowed to intersect. We refer to  \citet{sullivant2010trek} and references therein for more details on treks. For any trek $\pi$, we associate a trek monomial given by
\[
\pi(\theta) = \omega_{v_1u_1} \prod_{k=1}^{l-1} \lambda_{v_{k}v_{k+1}}\prod_{k=1}^{r-1} \lambda_{u_k u_{k+1}} \in \mathbb{R}[\theta].
\]
Denote by $\mathcal{T}(v,w)$ the set of all treks from $v$ to $w$. Then, the \emph{trek rule} \citep{spirtes2000causation, wright1934method} 
says that the image $\tau_G(\sigma_{uv})$ is given as the summation over all treks, i.e.,
\begin{equation*} 
    \tau_G(\sigma_{uv}) =  [(I-\Lambda)^{-\top} \Omega (I-\Lambda)^{-1}]_{uv} = \sum_{\pi \in \mathcal{T}(u,v)} \pi(\theta).
\end{equation*}

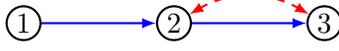
\begin{figure}[t]
\centering
\tikzset{
  every node/.style={circle, inner sep=0.3mm, minimum size=0.45cm, draw, thick, black, fill=white, text=black},
  every path/.style={thick}
}
\begin{tikzpicture}[align=center]

  \node[] (1) at (-4,0) {$1$};
  \node[] (2) at (-2,0) {$2$};
  \node[] (3) at (-0,0) {$3$};
  
  \draw[blue] [-latex] (1) edge (2);
  \draw[blue] [-latex] (2) edge (3);
   \draw[red, dashed, bend left] [latex-latex] (2) edge (3);
\end{tikzpicture}    
\caption{Mixed graph for the instrumental variable model.}
\label{fig:instrumental-variable}  
\end{figure}

\begin{example} \label{ex:trek}
    Consider the graph in Figure~\ref{fig:instrumental-variable} with nodes $V=\{1,2,3\}$. Then, the treks between $2$ and $3$ are given by $2 \leftrightarrow 3$, $2 \rightarrow 3$, and $2 \leftarrow 1 \rightarrow 2 \rightarrow 3$. Hence, we have that
    \[
    \tau_G(\sigma_{2 3}) = \omega_{2 3} + \omega_{2 2} \lambda_{2 3} + \omega_{1 1} \lambda_{1 2}^2 \lambda_{2 3}.
    \]
\end{example}

Now, we return to studying identifiability of the parameters $\theta$. An algebraic definition of rational identifiability is given as follows.

\begin{definition}
The parameter $q \in \theta$ is \emph{rationally identifiable} if there exist polynomials $a,b \in \mathbb{R}[\sigma]$ such that $a \not\in \ker(\tau_G)$ and $\tau_G(b)/ \tau_G(a) = q$. The mixed graph $G=(V,D,B)$ is said to be \emph{rationally identifiable} if all parameters $q \in \theta$ are rationally identifiable.
\end{definition}

The requirement that $a$ is not in $\ker(\tau_G)$ guarantees that $\tau_G(a)$ is not the zero polynomial, ensuring that we do not divide by zero.

\begin{example}
We return to the graph in Figure $2$. The parameter $\lambda_{23}$ is rationally identifiable by taking $b(\sigma)=\sigma_{13}$ and $a(\sigma)=\sigma_{12}$. This can be seen by calculating $\tau_G(b)=\tau_G(\sigma_{13}) = \omega_{11}\lambda_{12}\lambda_{23}$ and $\tau_G(a)=\tau_G(\sigma_{12}) = \omega_{11}\lambda_{12}$, which implies that $\tau_G(b)/\tau_G(a) = \lambda_{23}$. Moreover, we note that $\tau_G(a)$ is not the zero polynomial.
\end{example}

\begin{remark} \label{rem:lambdas-enough}
If all parameters $\lambda \subseteq \theta$ are rationally identifiable, then all remaining parameters $\omega = \theta \setminus \lambda$ are rationally identifiable since
\(
    \Omega = (I-\Lambda)^{\top} \Sigma (I - \Lambda)
\)
by Definition~\ref{def:linear-sem}.
\end{remark}

We now introduce some core concepts from algebra which we will use throughout the remainder of this paper. A subset  of a polynomial ring $\mathbb{R}[x_1,\ldots, x_m]$ is called an \emph{ideal} if it is closed under addition and
under multiplication by an arbitrary polynomial;  see \citet[Section 1.4]{cox2015ideals} for a precise definition. A basic example of an ideal is one that is generated by a given set of polynomials. For polynomials $f_1, \ldots, f_p \in \mathbb{R}[x_1,\ldots, x_m]$, we write
\[
    \langle f_1, \ldots, f_p \rangle := \left\{ \sum_{i=1}^p g_i f_i : g_i \in \mathbb{R}[x_1,\ldots, x_m] \right\},
\]
that is, the collection of all polynomial combinations of the $f_i$. By Hilbert’s basis theorem, every ideal can be expressed as one generated by finitely many polynomials. Working with ideals allows us to check rational identifiability of a given parameter $\lambda_{uv}$ by checking whether a certain ideal  $\mathcal{I}$ contains an \emph{identifying polynomial}. \\

We will work with the ideal $\mathcal{I} \subseteq \mathbb{R}[\theta, \sigma]$ of the graph of the parametrization $\tau_G$. It is generated by the polynomials
\begin{equation*} 
    \sigma_{uv} - \tau_G(\sigma_{uv}) \text{ for } u \leq v.
\end{equation*}

Note that a polynomial $g(\theta, \sigma ) \in \mathbb{R}[\theta, \sigma]$ is an element of $\mathcal{I}$ if and only if $g(\theta, \tau_G(\sigma))= 0$, where $\tau_G(\sigma)$ is the element-wise application of the function $\tau_G$ to the variables in $\sigma$.  The following result is Lemma 7 in \citet{foygel2012halftrekSUPP}; also see \citet{garcia2010identifying} for a more detailed discussion.  

\begin{lemma}[\citeauthor{foygel2012halftrekSUPP}, \citeyear{foygel2012halftrekSUPP}, Lemma 7] \label{lem:chracterization-identifiability-1}
    The parameter $q \in \theta$ is rationally identifiable if and only if $\mathcal{I}$ contains an element of the form $q a(\sigma)  - b(\sigma)$ with $a,b \in \mathbb{R}[\sigma]$ and $a \not\in \ker(\tau_G)$.
\end{lemma}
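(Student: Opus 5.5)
Both directions rest on one observation about the ideal $\mathcal{I}$: a polynomial $g(\theta,\sigma)$ lies in $\mathcal{I}$ exactly when substituting $\sigma \mapsto \tau_G(\sigma)$ kills it. The backward direction (membership $\Rightarrow$ identifiability) then follows by substitution. The forward direction (identifiability $\Rightarrow$ membership) follows by clearing denominators and exhibiting an explicit combination of generators.

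\emph{Backward direction.} Suppose $q\,a(\sigma) - b(\sigma) \in \mathcal{I}$ with $a \notin \ker(\tau_G)$. Substituting $\sigma_{uv} \mapsto \tau_G(\sigma_{uv})$ sends every generator $\sigma_{uv} - \tau_G(\sigma_{uv})$ to zero, and the substitution is a ring homomorphism $\mathbb{R}[\theta,\sigma] \to \mathbb{R}[\theta]$ that fixes $\theta$. Hence every element of $\mathcal{I}$ is sent to $0$, and so $q\,\tau_G(a) - \tau_G(b) = 0$ in $\mathbb{R}[\theta]$. Since $\tau_G(a) \neq 0$ and $\mathbb{R}[\theta]$ is an integral domain, we may divide in its fraction field to get $\tau_G(b)/\tau_G(a) = q$. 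This is exactly rational identifiability.

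\emph{Forward direction.} Suppose $\tau_G(b)/\tau_G(a) = q$ with $a \notin \ker(\tau_G)$, so that $q\,\tau_G(a) - \tau_G(b) = 0$ in $\mathbb{R}[\theta]$. Set $g = q\,a(\sigma) - b(\sigma)$; it suffices to show $g \in \mathcal{I}$. For this I use the standard fact that for any polynomial $h \in \mathbb{R}[\sigma]$,
\[
h(\sigma) - h(\tau_G(\sigma)) \in \langle \sigma_{uv} - \tau_G(\sigma_{uv}) : u \le v \rangle .
\]
The fact holds for monomials by telescoping: replace one variable at a time, writing $x_1\cdots x_k - y_1 \cdots y_k = \sum_i y_1\cdots y_{i-1}(x_i - y_i)x_{i+1}\cdots x_k$. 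It then extends to all $h$ by linearity. Applying it to $a$ and $b$ gives
\[
g = q\bigl(a(\sigma) - \tau_G(a)\bigr) - \bigl(b(\sigma) - \tau_G(b)\bigr) + \bigl(q\,\tau_G(a) - \tau_G(b)\bigr).
\]
The first two terms lie in $\mathcal{I}$ by the fact, and the last term vanishes, so $g \in \mathcal{I}$.

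The only mildly delicate point is the ideal-membership characterization itself, namely that elements of $\mathcal{I}$ are precisely the polynomials killed by substitution. The backward direction uses only the easy inclusion: substitution kills every generator, hence every element of $\mathcal{I}$. The forward direction uses the telescoping argument above, which is routine. I therefore expect no real obstacle; the only thing to check is that the substitution is a well-defined homomorphism, and this holds because acyclicity makes each $\tau_G(\sigma_{uv})$ a polynomial.
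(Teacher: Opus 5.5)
Your proof is correct and follows essentially the route the paper relies on. The paper gives no proof of its own: it cites Foygel et al.\ and the membership criterion $g \in \mathcal{I} \iff g(\theta,\tau_G(\sigma)) = 0$ stated just before the lemma, from which both directions follow exactly as you argue, and your telescoping identity supplies the less obvious direction of that criterion, which the paper takes for granted.
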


In Lemma~\ref{lem:chracterization-identifiability-1}, note that $\ker(\tau_G)$ is given by $\mathcal{I} \cap \mathbb{R}[\sigma]$. Since we want to keep the degree of the polynomials $a$ and $b$ as low as possible, we will allow them  to depend on the already identified parameters. For this, we show a refined version of Lemma~\ref{lem:chracterization-identifiability-1}. Let $\theta_{\id} \subseteq \theta$ be a subset of parameters that is already known to be rationally identifiable, and let $\theta_{\rem} := \theta \setminus \theta_{\id}$ be the remaining parameters.

\begin{lemma} \label{lem:chracterization-identifiability-2}
    The parameter $q \in \theta_{\rem}$ is rationally identifiable if and only if  the ideal $\mathcal{I}$ contains an element of the form $q a(\theta_{\id}, \sigma) - b(\theta_{\id}, \sigma)$ with $a,b \in \mathbb{R}[\theta_{\id}, \sigma]$ and $a \not \in \mathcal{I} \cap \mathbb{R}[\theta_{\id}, \sigma]$.
\end{lemma}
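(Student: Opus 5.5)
We prove the two directions separately, using Lemma~\ref{lem:chracterization-identifiability-1} as the bridge and the membership criterion $g \in \mathcal{I} \iff g(\theta,\tau_G(\sigma)) = 0$.

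\emph{($\Rightarrow$)} Suppose $q$ is rationally identifiable. Lemma~\ref{lem:chracterization-identifiability-1} gives $q\,a(\sigma) - b(\sigma) \in \mathcal{I}$ with $a \in \mathbb{R}[\sigma]$ and $a \notin \ker(\tau_G) = \mathcal{I}\cap\mathbb{R}[\sigma]$. Regard $a,b$ as elements of $\mathbb{R}[\theta_{\id},\sigma]$ that do not involve $\theta_{\id}$. It remains to check $a \notin \mathcal{I}\cap\mathbb{R}[\theta_{\id},\sigma]$. Since $a$ has no $\theta$-variables, $a \in \mathcal{I}$ would force $a(\tau_G(\sigma)) = 0$, that is, $a \in \ker(\tau_G)$, a contradiction.

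\emph{($\Leftarrow$)} Suppose $f := q\,a(\theta_{\id},\sigma) - b(\theta_{\id},\sigma) \in \mathcal{I}$ with $a \notin \mathcal{I}$. Substituting $\sigma \mapsto \tau_G(\sigma)$ gives the identity in $\mathbb{R}[\theta]$
\[
q\, a(\theta_{\id},\tau_G(\sigma)) = b(\theta_{\id},\tau_G(\sigma)).
\]
Moreover $a(\theta_{\id},\tau_G(\sigma)) \neq 0$, because $a \notin \mathcal{I}$. Each $p\in\theta_{\id}$ is rationally identifiable, so write $p = \tau_G(b_p)/\tau_G(a_p)$ with $a_p,b_p\in\mathbb{R}[\sigma]$ and $\tau_G(a_p)\neq 0$. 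Let $N$ exceed the degrees of $a$ and $b$ in the $\theta_{\id}$ variables, and set $c := \prod_{p\in\theta_{\id}} a_p^{N}$. Then define
\[
A(\sigma) := c\cdot a\bigl((b_p/a_p)_p,\sigma\bigr), \qquad B(\sigma) := c\cdot b\bigl((b_p/a_p)_p,\sigma\bigr),
\]
both of which lie in $\mathbb{R}[\sigma]$. Since $\tau_G$ is a ring homomorphism and $\tau_G(b_p)/\tau_G(a_p) = p$ in the fraction field of $\mathbb{R}[\theta]$, we get
\[
\tau_G(A) = \tau_G(c)\, a(\theta_{\id},\tau_G(\sigma)) \quad\text{and}\quad \tau_G(B) = \tau_G(c)\, b(\theta_{\id},\tau_G(\sigma)).
\]

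The main care point is exactly this substitution: the computation has to be carried out in the fraction field $\mathbb{R}(\theta)$, and we must verify that the identity $q\,a = b$ survives clearing denominators. It does, because it holds as a polynomial identity in $\theta$ and the substitution agrees with evaluation through $\tau_G$.

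Then $\tau_G(A)$ is a product of nonzero elements of the integral domain $\mathbb{R}[\theta]$, so $\tau_G(A) \neq 0$, i.e.\ $A\notin\ker(\tau_G)$. Also $q\,\tau_G(A) = \tau_G(B)$, so $q = \tau_G(B)/\tau_G(A)$. This is exactly the definition of rational identifiability; equivalently, we may invoke Lemma~\ref{lem:chracterization-identifiability-1} via $qA - B \in \mathcal{I}$.
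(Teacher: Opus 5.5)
Your proof is correct and follows the same route as the paper. The forward direction reuses Lemma~\ref{lem:chracterization-identifiability-1} with $a,b$ free of $\theta_{\id}$, and the reverse direction substitutes the identifying fractions for the parameters in $\theta_{\id}$ and clears denominators. Your explicit choice $c=\prod_p a_p^N$ and the factorization $\tau_G(A)=\tau_G(c)\,a(\theta_{\id},\tau_G(\sigma))$ make rigorous the step the paper only asserts, namely that the cleared denominator does not lie in $\ker(\tau_G)$.
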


Lemma~\ref{lem:chracterization-identifiability-2} justifies the following definition.

\begin{definition} \label{def:identifying-polynomial}
    Let $\prec$ be a total order on $\theta$. We say that $g_q \in \mathcal{I}$ is an \emph{identifying polynomial} for $q \in \theta$ with respect to ~the order $\prec$ if there is a  subset $\theta_{\id} \subseteq \theta$ with $s \prec q$ for all $s \in \theta_{\id}$ such that $g_q$ is of the form $g_q = q a(\theta_{\id}, \sigma) - b(\theta_{\id}, \sigma)$ with $a,b \in \mathbb{R}[\theta_{\id}, \sigma]$ and $a \not \in \mathcal{I} \cap \mathbb{R}[\theta_{\id}, \sigma]$. 

    If an identifying polynomial  $g_q$ exists for all $q \in \theta$ with respect to the order $\prec$, then we say that $\prec$ is an \emph{identifying order}, and we denote 
    $P_{\prec}=\{g_q: q \in \theta\}$.
\end{definition}

Note that, for a given identifying order, there might exist multiple sets of identifying polynomials $P_{\prec}$. For any such set of of identifying polynomials it holds that $|P_{\prec}|=|\theta|$.

\begin{example} \label{ex:identifying-order}
For the graph in Figure~\ref{fig:another-example}, we verified with Algorithm~\ref{alg:degree-bounded-identification} that the ideal $\mathcal{I}$ contains the three polynomials 
\begin{equation} \label{eq:ex-id-polynomials}
    \lambda_{12} \sigma_{11} - \sigma_{12}, \qquad
    \lambda_{14} \sigma_{11} - \sigma_{14}, \qquad
    \lambda_{34} \sigma_{23} + \lambda_{12} \sigma_{14} - \sigma_{24}.
\end{equation}
Hence, any order on the parameters for which $\lambda_{12}$ is smaller than $\lambda_{34}$ and all parameters in $\omega$ are larger than the parameters in $\lambda$ is an identifying order. For such an order, identifying polynomials for the parameters in $\lambda$ are given in~\eqref{eq:ex-id-polynomials}, and identifying polynomials for the parameters in $\omega$ are then given via the formula in Remark~\ref{rem:lambdas-enough}.
\end{example}

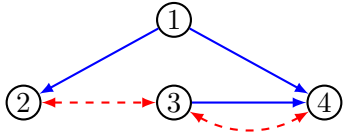
\begin{figure}[t]
\centering
\tikzset{
  every node/.style={circle, inner sep=0.3mm, minimum size=0.45cm, draw, thick, black, fill=white, text=black},
  every path/.style={thick}
}
\begin{tikzpicture}[align=center]
 \node[] (0) at (-2,1.1) {$1$};
  \node[] (1) at (-4,0) {$2$};
  \node[] (2) at (-2,0) {$3$};
  \node[] (3) at (-0,0) {$4$};
  
  \draw[blue] [-latex] (0) edge (1);
  \draw[blue] [-latex] (0) edge (3);
  \draw[blue] [-latex] (2) edge (3);
   \draw[red, dashed, bend right] [latex-latex] (2) edge (3);
   \draw[red, dashed] [latex-latex] (1) edge (2);
\end{tikzpicture}    
\caption{Another mixed graph.}
\label{fig:another-example}  
\end{figure}

Clearly, a mixed graph $G$ is rationally identifiable if and only if there exists an identifying order. In general, there might be multiple identifying orders on $\theta$. We will later derive an algorithm that finds the one such that the total degree of any identifying polynomial that appears is minimal. 

\begin{definition}
    Let $G$ be rationally identifiable and suppose that $P_{\prec}$ is a set of identifying polynomials with respect to the identifying order $\prec$. We denote by $\deg(P_{\prec})$ the maximal total degree of the polynomials in $P_{\prec}$. Moreover, we say that $\text{ID}_G := \min_{\prec} \min_{P_{\prec}} \deg(P_{\prec})$ is the  \emph{identifying degree}, where the first minimum is taken over all identifying orders and the second minimum is taken over all sets of identifying polynomials for a given order.
\end{definition}

\section{Complexity of Efficient Gr\"obner Basis Algorithms} \label{sec:complexity}
Gr\"obner bases are at the core of most algorithms in computational algebra. We now discuss how they can be used to determine if identifying polynomials exist in $\mathcal{I}$ and how to compute them.

\subsection{Gröbner Bases}
An accessible introduction to Gröbner bases is provided in \citet{cox2015ideals}. We briefly outline the key properties,  orienting ourselves to the exposition in \citet{garcia2010identifying}. \\

For an integer vector $u \in \mathbb{N}^m$, we denote a monomial in $\mathbb{R}[x_1, \ldots, x_m]$ by $\mathbf{x}^u=x_1^{u_1} x_2^{u_2} \cdots x_m^{u_m}$. A monomial order $\prec$ on the polynomial ring $\mathbb{R}[x_1, \ldots, x_m]$ is a total ordering of all monomials in the ring, which respects multiplication and sets $1$  as the smallest monomial. Concretely, this means that whenever $\mathbf{x}^u \preceq \mathbf{x}^v$ then  $\mathbf{x}^w \cdot \mathbf{x}^u \preceq \mathbf{x}^w \cdot \mathbf{x}^v$ and that $1 \preceq \mathbf{x}^u$ for all $u \in \mathbb{N}^n$.  Because $\prec$ is a total order, each polynomial $f \in \mathbb{R}[x_1, \ldots, x_m]$ contains a uniquely determined largest monomial, which we denote by $\text{in}_{\prec}(f)$ and which we call the initial term or the leading term. For an ideal  $I \subseteq \mathbb{R}[x_1, \ldots, x_m]$, we define its initial ideal by $\text{in}_{\prec}(I)=\langle \text{in}_{\prec}(f): f \in I \rangle $. 

\begin{definition} \label{def:gb}
A finite set $G \subseteq I$ is called a \emph{Gröbner basis} of $I$ (with respect to $\prec$) if
\[
    \text{in}_{\prec}(I) = \langle \text{in}_{\prec}(f): f \in G \rangle.
\]
A Gröbner basis is said to be \emph{reduced} if, for all $f \in G$, the coefficient of the leading term $\text{in}_{\prec}(g)$ is $1$ and no monomial of $f$ is divisible by any other leading term $\text{in}_{\prec}(g)$ with $g \in G \setminus \{f\}$.
\end{definition}

Gröbner bases have many nice properties including that the reduced Gröbner basis of an ideal is unique for a fixed monomial order. For the purpose of determining identifiability, we will use lexicographic and elimination orders \citep[Chapter 3]{ene2012groebner}, which we explain now.

In the lexicographic order, to  decide whether $\mathbf{x}^u \prec \mathbf{x}^v$, we examine the vector $v-u$ and find the first nonzero entry; then we declare $\mathbf{x}^u \prec \mathbf{x}^v$ exactly when that entry is positive.  Intuitively, the variable $x_1$ has the highest priority, so its degree determines the comparison. If two monomials have the same degree in $x_1$, we then compare their degrees in $x_2$, and proceed similarly through the remaining variables.

Elimination orders generalize the lexicographic order. Suppose that the variables $x_1, \ldots, x_m$ are partitioned into two blocks $B_1 \cup B_2$. Then, a monomial order on  $\mathbb{R}[x_1, \ldots, x_m]$ is an \emph{elimination order} for $B_1$ if $\mathbf{x}^{u} \prec \mathbf{x}^{v}$ whenever $ \mathbf{x}^{v}$ has larger degree in the $B_1$ variables than $\mathbf{x}^{u}$. If the two monomials  $\mathbf{x}^{u}$ and $\mathbf{x}^{v}$ have the same degree in the $B_1$ variables, then another term order is used to break ties.  The following is a more formal definition.

\begin{definition} \label{def:lex-elim-orders}
Let $\mathbb{R}[x_1, \ldots, x_m]$ be a polynomial ring with variables partitioned into blocks $B_1$ and $B_2$. For each block $i=1,2$, let $\prec_i$ be a monomial order on $\mathbb{R}[B_i]$. For a monomial $\mathbf{x}^u \in \mathbb{R}[x_1, \ldots, x_m]$, denote by $\mathbf{x}^{u_{B_i}}$ the restriction of $\mathbf{x}^u$ to the variables in block $B_i$.
A monomial order $\prec$ on $\mathbb{R}[x_1, \ldots, x_m]$ is called an \emph{elimination order for $B_1$} if  $ \mathbf{x}^v \succ \mathbf{x}^u$ whenever $\mathbf{x}^{v_{B_1}} \succ_1 \mathbf{x}^{u_{B_1}}$ or $\mathbf{x}^{v_{B_1}} = \mathbf{x}^{u_{B_1}}$ and $\mathbf{x}^{v_{B_2}} \succ_2 \mathbf{x}^{u_{B_2}}$. 
Moreover, we say that an elimination order $\prec$ for $B_1$ is a \emph{lex-elimination order} if $\prec_1$ is the lexicographic order for some permutation of the variables in $B_1$.
\end{definition}

Lex-elimination orders are useful for determining identifiability via Gröbner bases. Recall that we denote  by $\theta_{\id} \subseteq \theta$ the already identified parameters, and by $\theta_{\rem} = \theta \setminus \theta_{\id}$ the remaining parameters. To check the condition from Lemma~\ref{lem:chracterization-identifiability-2}, that is, whether the ideal $\mathcal{I}$ contains an identifying polynomial for the parameters $q \in \theta_{\rem}$, we have the following result.

\begin{proposition} \label{prop:id-with-gb}
Let $F$ be a reduced Gröbner basis of the ideal $\mathcal{I}$ with respect to a lex-elimination order on $\mathbb{R}[\theta, \sigma]$ for the variables in $\theta_{\rem}$. Then, all parameters $q \in \theta_{\rem}$ are rationally identifiable if and only if, for each parameter $q\in \theta_{\rem}$, the Gröbner basis $F$ contains an element that has leading term $q \cdot a(\theta_{\id}, \sigma)$  for  some $a \in \mathbb{R}[\theta_{\id}, \sigma]$.
\end{proposition}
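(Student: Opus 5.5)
The plan is to exploit two features of the lex-elimination order. First, every monomial containing a variable of $\theta_{\rem}$ dominates every monomial in $\mathbb{R}[\theta_{\id},\sigma]$. Second, within $\theta_{\rem}$ the comparison is lexicographic. I would prove the two directions separately, using Lemma~\ref{lem:chracterization-identifiability-2} as the bridge between ideal membership and identifiability.

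\emph{($\Leftarrow$)} Suppose that for each $q\in\theta_{\rem}$ there is $g_q\in F$ with $\text{in}_\prec(g_q)=q\cdot \text{in}(a_q)$, where $a_q\in\mathbb{R}[\theta_{\id},\sigma]$.
\begin{enumerate}
\item Write $g_q = q\,a_q + r_q$. Because the leading term has $\theta_{\rem}$-part exactly $q$, every monomial of $r_q$ has $\theta_{\rem}$-part lex-smaller than $q$. Under the lex order, such a part is a monomial in the $\theta_{\rem}$-variables that come after $q$, or it is $1$. Hence $r_q$ involves only later $\theta_{\rem}$ variables together with $\theta_{\id}$ and $\sigma$.
\item Show $a_q\notin\mathcal{I}$. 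If $a_q\in\mathcal{I}$, then $\text{in}(a_q)\in \text{in}(\mathcal{I})$ is divisible by $\text{in}(f)$ for some $f\in F$. That $f$ differs from $g_q$, since $\text{in}(f)$ divides $\text{in}(a_q)$, which is free of $\theta_{\rem}$, while $\text{in}(g_q)$ contains $q$. But then $\text{in}(f)$ also divides the monomial $q\,\text{in}(a_q)$ of $g_q$, contradicting reducedness of $F$.
\item Proceed by induction along $\theta_{\rem}$ from the lex-last variable backwards. For the last variable, $r_q$ lies in $\mathbb{R}[\theta_{\id},\sigma]$, and Lemma~\ref{lem:chracterization-identifiability-2} applies directly. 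For a general $q$, the later variables are already rationally identified, so substituting their rational expressions into $r_q$ and applying $\tau_G$ gives $q=-\tau_G(r_q)/\tau_G(a_q)$. Here $\tau_G(a_q)\neq 0$, so after clearing denominators we obtain a valid identifying relation.
\end{enumerate}

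\emph{($\Rightarrow$)} Fix $q\in\theta_{\rem}$. By Lemma~\ref{lem:chracterization-identifiability-1} there is $q\,a-b\in\mathcal{I}$ with $a,b\in\mathbb{R}[\sigma]$ and $a\notin\mathcal{I}$.
\begin{enumerate}
\item Replace $a$ and $b$ by their normal forms $a',b'$ modulo $F$. By the elimination property, the elements of $F$ whose leading terms avoid $\theta_{\rem}$ lie in $\mathbb{R}[\theta_{\id},\sigma]$. So reducing polynomials in $\mathbb{R}[\theta_{\id},\sigma]$ stays in that ring, and $q\,a'-b'\in\mathcal{I}$.
\item Since $a\notin \mathcal{I}$, the normal form $a'$ is nonzero and no monomial of $a'$ lies in $\text{in}(\mathcal{I})$. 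The leading term of $q\,a'-b'$ is $q\,\text{in}(a')$, because its $\theta_{\rem}$-degree dominates.
\item This monomial lies in $\text{in}(\mathcal{I})$, so some $f\in F$ has $\text{in}(f)\mid q\,\text{in}(a')$. If $q\nmid \text{in}(f)$, then $\text{in}(f)\mid\text{in}(a')$, which contradicts $a'$ being a normal form. Hence $\text{in}(f)=q\cdot m$ with $m\mid\text{in}(a')$ a monomial in $\theta_{\id},\sigma$, as required.
\end{enumerate}

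The main obstacle is the bookkeeping in ($\Leftarrow$). The Gröbner element for $q$ may involve other, lex-smaller, unidentified parameters, so one cannot apply Lemma~\ref{lem:chracterization-identifiability-2} naively with the given $\theta_{\id}$. This is why the induction along the reverse lex order is needed. The reducedness argument showing $a_q\notin\mathcal{I}$ is the other delicate point.
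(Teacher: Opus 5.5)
Your proof is correct and follows essentially the same route as the paper, which defers to the proof of Theorem~\ref{thm:id-with-gb}. For ($\Rightarrow$) the paper also reduces $a$ modulo the elimination part $F\cap\mathbb{R}[\theta_{\id},\sigma]$ and observes that $q\cdot\text{in}(\tilde a)$ can only be divisible by a leading term of the form $q\cdot m$; for ($\Leftarrow$) it also uses the lex structure to see that the tail involves only lex-smaller $\theta_{\rem}$-variables, uses reducedness to get $a\notin\mathcal{I}$, and inducts from the smallest variable, with you spelling out the reducedness and elimination details more explicitly.
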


\begin{example}
For the graph in Figure~\ref{fig:another-example}, suppose that the parameters $\theta_{\id}=\{\lambda_{12}, \lambda_{14}\}$ are already identified. For the remaining parameters $\theta_{\rem}$, consider the lex-order for $(\omega_{11}, \omega_{22}, \omega_{33}, \omega_{44}, \omega_{23}, \allowbreak \omega_{34}, \lambda_{34})$, that is, the variable $\lambda_{34}$ is the smallest and has lowest priority. Since we have already seen in Example~\ref{ex:identifying-order} that the graph is rationally identifiable, the Gröbner basis with respect to any such lex-elimination order for the variables in $\theta_{\rem}$ contains the element 
\[
     \lambda_{34} \sigma_{23} + \lambda_{12} \sigma_{14} - \sigma_{24}.
\]
\end{example}

The standard way to compute Gröbner bases is via Buchberger's algorithm, which is an extension of the division algorithm for univariate polynomials. It proceeds by canceling leading terms until no term in the remainder can be divided by the leading term of the divisor. Unfortunately, the computational complexity can be double exponential in the number of input variables  \citep{mayr1997somcecomplexity}. Hence, using Proposition~\ref{prop:id-with-gb} becomes infeasible in practice already for graphs on $5$ to $6$ nodes. However, computing Gröbner bases of homogeneous ideals is much more efficient since we can compute the Gröbner basis degree by degree.

\subsection{Homogeneous Ideals}

In this section, we recall several complexity results for computing Gr\"obner bases of homogeneous ideals, which we will use in the remainder of this paper. A polynomial is homogeneous if all its terms have the same total degrees. An ideal is homogeneous if it can be generated by homogeneous polynomials \citet[Section 8.3]{cox2015ideals}. When working with homogeneous ideals, we consider graded monomial orders, which are monomial orders that first compare monomials by total degree and only break ties using a secondary order. \\

Now, suppose that we compute the reduced Gröbner basis $\mathcal{G}_{\prec}$ of a homogeneous ideal via Buchberger's algorithm with respect to a graded monomial ordering $\prec$. Due to the grading of the ordering, we always process lower degree polynomials first, and, importantly, when all elements of degree $\leq d$ have been processed, all elements of the Gröbner basis in degrees $\leq d$ have been found \citep[Section 10.1]{cox2015ideals}. In contrast, when computing Gröbner bases of non-homogeneous ideals, lower-degree elements can appear at any stage. In this way, we can find  the set of all elements of $\mathcal{G}_{\prec}$ of degree at most $d$, which we denote by $\mathcal{G}_{\prec,d}$.  \\

This idea is leveraged by Faugére's F4 algorithm \citep{faugere1999F4, faugere2002F5} which uses linear algebra to greatly speed up Gr\"obner basis computation. More recently, it was observed that computing Gröbner bases becomes simpler in weighted homogeneous systems \citep{faugere2016complexity}. A weighting in a polynomial ring $\mathbb{R}[x_1, \ldots, x_m]$ is given by a weight vector $w=(w_1, \ldots, w_m) \in \mathbb{N}^m$. The weighting $w$ defines a $w$-grading of the ring by setting the degree of the variables to its weight. That is, the weighted degree of a monomial is given by $\deg_w(x_1^{u_1} x_2^{u_2} \cdots x_m^{u_m}) = \sum_{i=1}^m w_i u_i$. Moreover, a polynomial $f$ is said to be homogeneous with respect to the weighting $w$, or $w$-homogeneous, if the weighted degree of all the monomials in $f$ coincide. In this setting, we have the following complexity bound for computing Gröbner bases up to the weighted degree $d_w$.

\begin{proposition}[\citeauthor{faugere2016complexity}, \citeyear{faugere2016complexity}, Section 5.1] \label{prop:deg-bounded-gb-complexity}
Suppose that $w=(w_1, \ldots, w_m) \in \mathbb{N}^m$ is a weighting. 
Let $I = \langle f_1, \ldots, f_p \rangle \subseteq \mathbb{R}[x_1, \ldots, x_m]$ be a $w$-homogeneous ideal and let $\prec$ be a $w$-graded monomial ordering.  The complexity (in terms of arithmetic operations in $\mathbb{R}$) to compute $\mathcal{G}_{\prec, d_w}$  is bounded by
\begin{equation*} 
O \left(\frac{1}{(\prod_{i=1}^m w_i)^\alpha} \binom{m+d_w-1}{d_w}^\alpha \right),
\end{equation*}
where $\alpha$ is a constant such that row reduction of a $n \times n$ matrix can be performed in $O(n^{\alpha})$ operations. 
\end{proposition}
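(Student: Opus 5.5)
The plan is to follow the linear-algebra (Lazard/Macaulay) viewpoint that underlies F4, specialised to a $w$-grading. Since $I$ is $w$-homogeneous and $\prec$ is $w$-graded, $I$ splits as a direct sum of its $w$-homogeneous components $I_\delta$, and the leading monomials of $I$ in weighted degree $\delta$ are exactly the leading monomials of elements of the finite-dimensional vector space $I_\delta$. For each $\delta \le d_w$, I form the weighted Macaulay matrix $M_\delta$. Its columns are indexed by the monomials of weighted degree exactly $\delta$, sorted decreasingly by $\prec$. Its rows are the coefficient vectors of the products $\mathbf{x}^u f_i$ with $\deg_w(\mathbf{x}^u)+\deg_w(f_i)=\delta$. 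By homogeneity, every element of $I_\delta$ is a $\mathbb{R}$-linear combination of such products, so the row space of $M_\delta$ equals $I_\delta$.

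Correctness then comes down to one observation. After Gaussian elimination to reduced row echelon form, the pivot columns of $M_\delta$ are precisely the monomials in $\text{in}_\prec(I)$ of weighted degree $\delta$. The elements of $\mathcal{G}_{\prec,d_w}$ of weighted degree $\delta$ are therefore the echelon rows whose pivot monomial is not divisible by any pivot monomial found in a lower degree. Fully reducing these rows, which echelon form already does within degree $\delta$, yields exactly the reduced Gröbner basis elements. Doing this for $\delta=1,\dots,d_w$ computes $\mathcal{G}_{\prec,d_w}$.

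For the complexity, I first bound the number of columns, that is, the number $N_\delta$ of monomials of weighted degree $\delta$. This is the coefficient of $t^\delta$ in $\prod_i (1-t^{w_i})^{-1}$, equivalently the number of lattice points $u\in\mathbb{N}^m$ on the hyperplane $\sum w_i u_i=\delta$. I would compare this with the unweighted count $\binom{m+\delta-1}{\delta}$ via the volume of the simplex $\{\sum w_i u_i = \delta\}$: rescaling coordinates by $w_i$ shrinks the volume by $\prod_i w_i$, giving $N_\delta \lesssim \binom{m+\delta-1}{\delta}/\prod_i w_i$ up to lower-order terms. Next, I keep the number of rows of the same order. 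Following F5/F4 practice, I retain only rows that are not reducible to zero and discard redundant products by selecting a row basis. Equivalently, I use the rectangular cost bound $\text{rows}\cdot\text{cols}^{\alpha-1}$, with only polynomially many rows, absorbed into the constant. Row reduction of $M_\delta$ then costs $O(N_\delta^\alpha)$. Finally, I sum over $\delta\le d_w$. The counts grow monotonically in $\delta$, and the sum of $\alpha$-th powers is dominated by the top degree up to a constant factor, because $\binom{m+\delta-1}{\delta}$ grows at least geometrically in the relevant range. This gives the stated $O\bigl((\prod_i w_i)^{-\alpha}\binom{m+d_w-1}{d_w}^\alpha\bigr)$.

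I expect the main obstacle to be the monomial count with the $1/\prod_i w_i$ factor. The count is genuinely an asymptotic statement, and divisibility issues between $\delta$ and the weights make the exact count fluctuate. I would handle this in one of two ways: by a lattice-point/volume argument with an explicit error term, or by the cruder injection $u\mapsto$ a monomial of ordinary degree $\le \delta$ obtained from a quotient and remainder decomposition of each exponent. I would try the volume argument first and accept the bound in the big-$O$ sense, as in \citet{faugere2016complexity}. A secondary delicate point is controlling the number of rows, so that the cost is governed by the column count rather than by $p$ times the number of multipliers.
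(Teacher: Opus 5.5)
The paper does not prove this proposition. It cites \citet{faugere2016complexity} and only remarks that the bound follows from the linear-algebra encoding behind F4/F5, which is exactly your route. Your correctness part is fine for $w$-homogeneous $f_i$ and positive weights: the row space of $M_\delta$ is $I_\delta$, the pivots of the reduced echelon form are $\text{in}_\prec(I)_\delta$, and the new basis elements are the rows whose pivots are not divisible by lower-degree pivots.

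The gap is in the cost accounting, where two of your steps hold only in incompatible regimes.
\begin{itemize}
\item The estimate $N_\delta\le C\binom{m+\delta-1}{\delta}/\prod_i w_i$ is only asymptotic, for fixed $m$ and $w$ as $\delta\to\infty$ (roughly, $\delta$ large compared with $m\sum_i w_i$). Outside that regime it fails badly. With all $w_i=2$ and $\delta=2$, the monomials of weighted degree $2$ are $x_1,\dots,x_m$, so $N_2=m$, while $\binom{m+1}{2}/2^m\to 0$.
\item Your reduction of $\sum_{\delta\le d_w}N_\delta^\alpha$ to its top term rests on geometric growth. But $\binom{m+\delta-1}{\delta}/\binom{m+\delta-2}{\delta-1}=(m+\delta-1)/\delta$ is at least $2$ only for $\delta<m$. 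In the regime where the count estimate is valid, this ratio tends to $1$. There the sum is of order $d_w/(\alpha(m-1)+1)$ times the top term, i.e.\ you lose a factor of order $d_w$.
\end{itemize}
Carried out consistently, your argument therefore gives the bound only with an extra factor of order $d_w$, and nothing in the proposal removes it. Also, $N_\delta$ need not be monotone (with all weights $2$ it vanishes in odd degrees), though a monotone majorant repairs this.

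Two further points.

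First, your ``cruder injection'' of weighted-degree-$\delta$ monomials into monomials of ordinary degree $\le\delta$ can only give $N_\delta\le\binom{m+\delta}{\delta}$, with no factor $1/\prod_i w_i$. The quotient--remainder idea has to be used in the opposite direction. Write $v_i=w_iq_i+r_i$ with $0\le r_i<w_i$, and set $W=\sum_i(w_i-1)$. This gives $\binom{m+\delta+W-1}{\delta+W}=\sum_r N_{\delta+W-|r|}$, where $r$ runs over the $\prod_i w_i$ residue vectors. Hence $N_\delta\le\binom{m+\delta+W-1}{\delta+W}/\prod_i w_i$ whenever $N$ is non-decreasing (e.g.\ if some $w_i=1$). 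The factor $1/\prod_i w_i$ thus comes with a degree shift by $W$, which is harmless only for $\delta$ large compared with $mW$. This matches the asymptotic caveat above.

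Second, the displayed bound has no dependence on $p$. ``Retaining only rows not reducible to zero'' cannot be done before the very elimination you are costing, and F5's criterion guarantees it only for regular sequences. The rectangular bound $\text{rows}\cdot\text{cols}^{\alpha-1}$ costs an extra factor of up to about $p$. If you instead build the degree-$\delta$ rows as $x_j$ times an echelon basis of $I_{\delta-w_j}$, the factor is up to about $m$. Either way it is a constant only if $p$ (or $m$) is held fixed. So you need either a regularity hypothesis or to keep this factor in the bound.
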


This complexity result follows naturally from the standard encoding of Gr\"obner basis computation into linear algebra which state-of-the-art algorithms such as F4 and F5 leverage. These algorithms are typically implemented in such a way that they compute degree-by-degree as described above. Observe that since $O(\binom{m+d_w-1}{d_w}) = O((m+d_w)^{d_w})$, the computation of $G_{\prec, d_w}$ is actually polynomial in the number of variables $m$, but still grows exponentially in the degree $d_w$. 
In the context of identification, this means that if we know a-priori that there is a set of homogenized identifying polynomials for a graph $G$ such that all of them are of degree at most $d_w$ for a specific weighting, then these identifying polynomials can be computed in polynomial time in the size of the vertex and edge set of $G$. This will be the main focus of our next section.

\section{Weighting and Homogenization} \label{sec:weighting-and-homogenization}
As shown in Section \ref{sec:algebraic-setup}, deciding identifiability of a parameter $q \in  \theta$ is equivalent to deciding whether $\mathcal{I}$ contains an identifying polynomial. In this section, we transfer this problem to an ideal that is homogeneous with respect to a specific weighting. We set the weight of $\sigma_{uv}$  to $w(\sigma_{uv}) = \deg(\tau_G(\sigma_{uv}))$. Observe that $\deg(\tau_G(\sigma_{uv}))$ is equal to the maximal length of any trek between $u$ and $v$, where the length of a trek $\pi$ is the total degree of the trek monomial $\pi(\theta)$. If there is no trek between two nodes $u$ and $v$, then we set the maximal trek length between $u$ and $v$ to $1$. In this way, we define the \emph{trek weighting} $w$ in the ring $\mathbb{R}[\theta, \sigma]$ by setting 
\begin{equation} \label{eq:trek-weighting}
\begin{aligned}
w(q) &= 1 \text{ for all } q \in \theta, \\
w(\sigma_{uv}) &= \text{ maximal length of any trek between }u \text{ and }v.
\end{aligned}
\end{equation}
Moreover, we denote by $w_\mathrm{trek}$ to be the length of the longest trek in $G$.  Now, we consider the $w$-homogenized ideal $\mathcal{I}^{\wh}=\langle f^{\wh}: f \in \mathcal{I} \rangle \subseteq \mathbb{R}[\theta, \sigma, h]$, where $f^{\wh}$ denotes the weighted homogenization with respect to a new variable $h$ with weight $w(h)=1$. The weighted homogenization of a polynomial $f(x_1, \ldots, x_m) \in \mathbb{R}[x_1, \ldots, x_m]$ is given by $f^{\wh}(x_1, \ldots, x_m,h) = h^{d_w} f(x_1/h^{w_1}, \ldots, x_m/h^{w_m})$, where $d_m$ is the $w$-degree of $f$ and $w=(w_1, \ldots,w_m)$ is a weighting. Checking identifiability will then correspond to deciding whether $\mathcal{I}^{\wh}$ contains a homogenized identifying polynomial. The advantage is that we can organize our Gröbner basis computations degree-by-degree and apply the efficiency result given in Proposition~\ref{prop:deg-bounded-gb-complexity}. Recall that for a $w$-homogeneous ideal $J$, the dehomogenized ideal is given by $J^{\deh}=\{f^{\deh}:f \in J\}$, where $f^{\deh}$ is the usual dehomogenization obtained by plugging-in $1$ for the variable $h$. In our proofs, we use fundamental properties of homogenization and dehomogenization as described in \citet[Section 4.3]{kreuzer2005computational}. In particular, $(\mathcal{I}^{\wh})^{\deh}=\mathcal{I}$ and $f \in \mathcal{I}^{\wh}$ if and only if $f^{\deh} \in \mathcal{I}$ for a weighted homogeneous polynomial $f$. \\

The next proposition shows that $\mathcal{I}^{\wh}$ is generated by the polynomials $\sigma_{uv}-\tau_G(\sigma_{uv})^{\wh}$, which are $w$-homogeneous by the definition of the trek-weighting.

\begin{proposition} \label{prop:hom-equal}
Let $\bar{\mathcal{I}} \subseteq \mathbb{R}[\theta, \sigma, h]$ be the ideal generated by $\sigma_{uv}-\tau_G(\sigma_{uv})^{\wh}$, $u \leq v$. Then, $\bar{\mathcal{I}}$ is homogeneous with respect to the trek weighting, and it holds that
\begin{itemize}
    \item[(i)] $\bar{\mathcal{I}}=\mathcal{I}^{\wh}$, and \item[(ii)] $\mathcal{I}^{\wh} \cap \mathbb{R}[\theta_{\id}, \sigma ,h]= (\mathcal{I} \cap \mathbb{R}[\theta_{\id}, \sigma])^{\wh}$ for any subset $\theta_{\id} \subseteq \theta$.
\end{itemize}
\end{proposition}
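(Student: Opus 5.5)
The plan has three parts: check homogeneity of the generators directly, prove (i) by showing that $h$ is a nonzerodivisor modulo $\bar{\mathcal{I}}$, and deduce (ii) from (i) together with the basic homogenization facts from \citet[Section 4.3]{kreuzer2005computational} recalled above.

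\emph{Homogeneity.} Every $q\in\theta$ has weight $1$, so the $w$-degree of $\tau_G(\sigma_{uv})$ equals its total degree. That total degree is the maximal trek length, which is $w(\sigma_{uv})$. Hence $\tau_G(\sigma_{uv})^{\wh}$ is $w$-homogeneous of degree $w(\sigma_{uv})$, and so is $\sigma_{uv}-\tau_G(\sigma_{uv})^{\wh}$. If there is no trek between $u$ and $v$, then $\tau_G(\sigma_{uv})=0$ and the generator is just $\sigma_{uv}$, which is homogeneous of weight $1$. By the same degree count we get $\sigma_{uv}-\tau_G(\sigma_{uv})^{\wh}=(\sigma_{uv}-\tau_G(\sigma_{uv}))^{\wh}$, so $\bar{\mathcal{I}}\subseteq \mathcal{I}^{\wh}$.

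\emph{Part (i), reverse inclusion.} The key structural observation concerns the substitution homomorphism $\phi:\mathbb{R}[\theta,\sigma,h]\to\mathbb{R}[\theta,h]$ given by $\sigma_{uv}\mapsto \tau_G(\sigma_{uv})^{\wh}$, identity on $\theta,h$. It is surjective, and its kernel is exactly $\bar{\mathcal{I}}$, because modulo $\bar{\mathcal{I}}$ every $\sigma_{uv}$ can be rewritten as a polynomial in $\theta,h$. Hence $\mathbb{R}[\theta,\sigma,h]/\bar{\mathcal{I}}\cong\mathbb{R}[\theta,h]$, which is an integral domain. Since $\phi(h)=h\neq 0$, the variable $h$ is a nonzerodivisor modulo $\bar{\mathcal{I}}$, i.e.\ $\bar{\mathcal{I}}:h^\infty=\bar{\mathcal{I}}$.

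Now let $f\in\mathcal{I}^{\wh}$. Since $\mathcal{I}^{\wh}$ is $w$-homogeneous, we may assume $f$ is homogeneous. Then $f^{\deh}\in\mathcal{I}$, so $f^{\deh}=\sum_{uv} g_{uv}(\sigma_{uv}-\tau_G(\sigma_{uv}))$ for some polynomials $g_{uv}$. Homogenizing this identity gives $h^{a}\,(f^{\deh})^{\wh}\in\bar{\mathcal{I}}$ for large $a$: multiply each summand by a suitable power of $h$ to make all terms of equal weighted degree. Also $f=h^{k}(f^{\deh})^{\wh}$ for some $k$, so $h^{a}f\in\bar{\mathcal{I}}$ after adjusting exponents. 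The nonzerodivisor property then yields $f\in\bar{\mathcal{I}}$. I expect the main obstacle to be this step: identifying $\bar{\mathcal{I}}$ as the kernel of $\phi$ and thereby proving the saturation $\bar{\mathcal{I}}:h^\infty=\bar{\mathcal{I}}$. The rest is routine bookkeeping of powers of $h$.

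\emph{Part (ii).} Write $J=\mathcal{I}\cap\mathbb{R}[\theta_{\id},\sigma]$.

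For "$\supseteq$": if $f\in J$, then $f^{\wh}\in\mathcal{I}^{\wh}$, and $f^{\wh}$ involves only the variables $\theta_{\id},\sigma,h$. So every generator of $J^{\wh}$ lies in the left-hand side, which is an ideal of $\mathbb{R}[\theta_{\id},\sigma,h]$.

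For "$\subseteq$": the left-hand side is $w$-homogeneous, since it is the intersection of a homogeneous ideal with a subring generated by a subset of the variables. So it suffices to treat a homogeneous $g$ in it. Then $g^{\deh}\in(\mathcal{I}^{\wh})^{\deh}=\mathcal{I}$, and $g^{\deh}$ involves no variables outside $\theta_{\id},\sigma$, so $g^{\deh}\in J$. Finally $g=h^{k}(g^{\deh})^{\wh}\in J^{\wh}$, which completes the proof.
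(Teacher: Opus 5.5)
Your proof is correct, and for part (i) it takes a genuinely different route from the paper. The paper argues via Gröbner bases. Under a $w$-graded elimination order for $\sigma$, the leading term of $\sigma_{uv}-\tau_G(\sigma_{uv})$ is $\sigma_{uv}$, because both parts have the same $w$-degree. These leading terms are pairwise coprime, so the generators form a Gröbner basis of $\mathcal{I}$. The paper then cites the theorem that homogenizing a Gröbner basis for a graded order gives a generating set of the homogenized ideal \citep[Section 8.4, Theorem 4]{cox2015ideals}. You instead show that $\bar{\mathcal{I}}$ is the kernel of the substitution map onto $\mathbb{R}[\theta,h]$. Hence the quotient is a domain and $h$ is a nonzerodivisor modulo $\bar{\mathcal{I}}$, and you check by hand the general inclusion $\mathcal{I}^{\wh}\subseteq \bar{\mathcal{I}}:h^{\infty}$. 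Your argument is self-contained and shows along the way that $\bar{\mathcal{I}}$ is prime. It also avoids the fact that the cited theorem is stated for standard rather than weighted homogenization, so the paper implicitly needs its weighted analogue, e.g.\ from \citet{kreuzer2005computational}. The paper's argument is shorter and stays within the Gröbner-basis framework used in the rest of the paper. Part (ii) is essentially the paper's argument, but you make the reduction to $w$-homogeneous elements explicit. The paper's ``$\subseteq$'' step leaves this implicit, even though the identity $g=h^k(g^{\deh})^{\wh}$ requires $g$ to be homogeneous.
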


Now, checking identifiability can be done in $\mathcal{I}^{\wh}$. This is shown in the following lemma. 

\begin{lemma} \label{lem:chracterization-identifiability-hom}
    The parameter $q \in \theta_{\rem}$ is rationally identifiable if and only if the ideal $\mathcal{I}^{\wh}$ contains an element of the form $q a(\theta_{\id}, \sigma,h) - b(\theta_{\id}, \sigma,h)$ with $a,b \in \mathbb{R}[\theta_{\id}, \sigma,h]$ and $a \not \in \mathcal{I}^{\wh} \cap \mathbb{R}[\theta_{\id}, \sigma,h]$.
\end{lemma}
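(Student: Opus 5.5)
The plan is to reduce the statement to Lemma~\ref{lem:chracterization-identifiability-2} by passing back and forth between $\mathcal{I}$ and $\mathcal{I}^{\wh}$ via homogenization and dehomogenization. Throughout I will use the standard facts recalled above from \citet{kreuzer2005computational}: $(\mathcal{I}^{\wh})^{\deh}=\mathcal{I}$; $f^{\wh}\in\mathcal{I}^{\wh}$ whenever $f\in\mathcal{I}$; and a $w$-homogeneous $f$ lies in $\mathcal{I}^{\wh}$ if and only if $f^{\deh}\in\mathcal{I}$. I will also use Proposition~\ref{prop:hom-equal}(ii), which identifies $\mathcal{I}^{\wh}\cap\mathbb{R}[\theta_{\id},\sigma,h]$ with $(\mathcal{I}\cap\mathbb{R}[\theta_{\id},\sigma])^{\wh}$.

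For the forward direction, suppose $q$ is rationally identifiable. Lemma~\ref{lem:chracterization-identifiability-2} then gives $g=qa-b\in\mathcal{I}$ with $a,b\in\mathbb{R}[\theta_{\id},\sigma]$ and $a\notin\mathcal{I}\cap\mathbb{R}[\theta_{\id},\sigma]$. Let $D$ be the $w$-degree of $g$. Since $w(q)=1$ and every term of $g$ either contains $q$ exactly once or not at all, the homogenization has the form
\[
g^{\wh}=q\,\tilde a-\tilde b, \qquad \tilde a=h^{D-1}a(x/h^{w}),\quad \tilde b=h^{D}b(x/h^{w}),
\]
with $\tilde a,\tilde b\in\mathbb{R}[\theta_{\id},\sigma,h]$ polynomials, and $g^{\wh}\in\mathcal{I}^{\wh}$. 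It remains to see that $\tilde a\notin\mathcal{I}^{\wh}$. Note that $\tilde a$ is $w$-homogeneous, being a piece of a homogeneous polynomial, and $\tilde a^{\deh}=a$. If $\tilde a$ were in $\mathcal{I}^{\wh}$, the dehomogenization criterion would give $a\in\mathcal{I}$, contradicting the choice of $a$.

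For the converse, suppose $qa-b\in\mathcal{I}^{\wh}$ with $a,b\in\mathbb{R}[\theta_{\id},\sigma,h]$ and $a\notin\mathcal{I}^{\wh}$. Since $\mathcal{I}^{\wh}$ is $w$-homogeneous, I first pass to homogeneous components. Decompose $qa-b$ by $w$-degree; each component has the form $qa_k-b_{k+1}$, where $a_k$ and $b_{k+1}$ are the components of $a$ and $b$ of degrees $k$ and $k+1$, and each such component lies in $\mathcal{I}^{\wh}$. Because $a\notin\mathcal{I}^{\wh}$ and the ideal is homogeneous, some $a_k\notin\mathcal{I}^{\wh}$, and we fix that $k$. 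Dehomogenizing, $qa_k^{\deh}-b_{k+1}^{\deh}\in\mathcal{I}$, and $a_k^{\deh}\in\mathbb{R}[\theta_{\id},\sigma]$. Moreover $a_k^{\deh}\notin\mathcal{I}$, since $a_k$ is homogeneous and $a_k\notin\mathcal{I}^{\wh}$. Lemma~\ref{lem:chracterization-identifiability-2} then yields rational identifiability of $q$.

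The main obstacle I expect is this homogeneous-component reduction in the converse, together with the correct use of the criterion ``$f\in\mathcal{I}^{\wh}$ iff $f^{\deh}\in\mathcal{I}$,'' which holds only for $w$-homogeneous $f$. A non-homogeneous $a\notin\mathcal{I}^{\wh}$ could dehomogenize into $\mathcal{I}$, for example $a=f^{\wh}-hf^{\wh}$. Selecting a single non-member component $a_k$ avoids this problem. The forward direction also needs care in checking that factoring $q$ out of $g^{\wh}$ leaves polynomial coefficients, which uses that $g$ has $w$-degree at least $1+\deg_w a$ and that $w(q)=1$.
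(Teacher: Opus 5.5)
Your proof is correct and follows the same route as the paper. For the forward direction you homogenize an identifying polynomial from Lemma~\ref{lem:chracterization-identifiability-2}; for the converse you dehomogenize and invoke that lemma again. Your extra step in the converse, passing to a single $w$-homogeneous component $q a_k - b_{k+1}$ with $a_k\notin\mathcal{I}^{\wh}$, is genuinely needed. The paper glosses over it: it infers $a^{\deh}\notin\mathcal{I}$ directly from $a\notin\mathcal{I}^{\wh}$. That inference fails for non-homogeneous $a$ such as $(1-h)a_0$, which dehomogenizes to $0$. Your illustrating example makes the same point, provided you take $f\notin\mathcal{I}$; for $f\in\mathcal{I}$ one has $(1-h)f^{\wh}\in\mathcal{I}^{\wh}$.
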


We say that the polynomial $q a(\theta_{\id}, \sigma,h) - b(\theta_{\id}, \sigma, h)$ is a \emph{homogenized identifying polynomial} for $q \in \theta_{\rem}$. 
By computing a reduced Gröbner basis, we can check whether $\mathcal{I}^{\wh}$ contains such a polynomial. For this, we define $w$-graded elimination orders, which we will use to compute Gröbner bases of $\mathcal{I}^{\wh}$. These orders first compare monomials by their weighted degree and then break ties using an elimination order.

\begin{definition} \label{def:block-orders}
Let $\mathbb{R}[x_1, \ldots, x_m]$ be a polynomial ring with variables partitioned into blocks $B_1$ and $B_2$. Suppose that $w=(w_1, \ldots, w_m) \in \mathbb{N}^m$ is a weighting. A monomial order on $\mathbb{R}[x_1, \ldots, x_m]$ is a $w$-graded (lex-)elimination order, if we first compare monomials by their weighted degree, and break ties using a (lex-)elimination order.
\end{definition}

\begin{theorem}\label{thm:id-with-gb}
    Let $F$ be a reduced Gröbner basis of the ideal $\mathcal{I}^{\wh} \subseteq \mathbb{R}[\theta, \sigma, h]$ with respect to a $w$-graded lex-elimination order for $\theta_{rem}$, where the weighting $w$ is the trek-weighting.
    Then, all parameters $q \in \theta_{\rem}$ are rationally identifiable if and only if, for each $q \in \theta_{\rem}$, the basis $F$ contains an element that has leading term $q \cdot a(\theta_{\id}, \sigma, h)$  for  some $a \in \mathbb{R}[\theta_{\id}, \sigma, h]$.

\end{theorem}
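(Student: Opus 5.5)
The plan is to mirror the proof of Proposition~\ref{prop:id-with-gb}, using Lemma~\ref{lem:chracterization-identifiability-hom} in place of Lemma~\ref{lem:chracterization-identifiability-2}, and to use the $w$-homogeneity of $\mathcal{I}^{\wh}$ to cope with the weighted grading. The order is first graded by trek weight, with ties broken by a lex-elimination order for $\theta_{\rem}$. So it is not a true elimination order on all of $\mathbb{R}[\theta,\sigma,h]$. However, it restricts to one on each $w$-homogeneous component. Since $\mathcal{I}^{\wh}$ is $w$-homogeneous (Proposition~\ref{prop:hom-equal}) and $F$ is reduced, every element of $F$ is $w$-homogeneous. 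We may therefore always work inside a single graded piece, where the order behaves as a lex-elimination order.

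For ($\Leftarrow$), take $g\in F$ with $\text{in}(g)=q\cdot a_0$, where $a_0$ is a monomial in $\theta_{\id},\sigma,h$.
\begin{enumerate}
\item Every other monomial of $g$ has the same $w$-degree as the leading one. Within a fixed degree the lex-elimination tie-break compares $\theta_{\rem}$-parts lexicographically, with $q$ placed as the lowest-priority variable among $\theta_{\rem}$, as in the earlier example.
\item Hence every monomial of $g$ has $\theta_{\rem}$-part $1$ or $q$. This gives $g=q\,a-b$ with $a,b\in\mathbb{R}[\theta_{\id},\sigma,h]$.
\item If $a$ lay in $\mathcal{I}^{\wh}\cap\mathbb{R}[\theta_{\id},\sigma,h]$, its leading term would be divisible by the leading term of some element of $F$. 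That element would involve no $\theta_{\rem}$ variable, again by the tie-break inside its graded component. Then $\text{in}(g)=q\,\text{in}(a)$ would be divisible by the leading term of another basis element, contradicting reducedness.
\item Lemma~\ref{lem:chracterization-identifiability-hom} then gives identifiability of $q$.
\end{enumerate}

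For ($\Rightarrow$), Lemma~\ref{lem:chracterization-identifiability-hom} supplies $f=q\,a-b\in\mathcal{I}^{\wh}$ with $a\notin J:=\mathcal{I}^{\wh}\cap\mathbb{R}[\theta_{\id},\sigma,h]$.
\begin{enumerate}
\item Pass to $w$-homogeneous components. Some component of $a$ lies outside $J$, and the matching components of $f$ still lie in $\mathcal{I}^{\wh}$, since $J$ and $\mathcal{I}^{\wh}$ are homogeneous and $q$ has weight $1$. So we may assume $f$ is homogeneous.
\item Replace $a$ by its normal form modulo $F\cap\mathbb{R}[\theta_{\id},\sigma,h]$, which is a Gröbner basis of $J$ by the homogeneous elimination property. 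Subtract the corresponding multiple of $q$ times those elements from $f$. Now $a\neq 0$ and no monomial of $a$ lies in $\text{in}(J)$.
\item Then $\text{in}(f)=q\,\text{in}(a)$, because $q$-monomials dominate within the same degree. It is divisible by $\text{in}(g)$ for some $g\in F$.
\item Since $\text{in}(g)$ divides $q\,\text{in}(a)$, either $\text{in}(g)\mid\text{in}(a)$, which is impossible because $\text{in}(a)\notin\text{in}(J)$ and such a $g$ would lie in $J$, or $\text{in}(g)=q\cdot m$ with $m$ free of $\theta_{\rem}$.
\end{enumerate}

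The main obstacle is the homogeneous elimination property: for an order that is graded first, the basis $F\cap\mathbb{R}[\theta_{\id},\sigma,h]$ must generate $\text{in}(J)$, and leading terms free of $\theta_{\rem}$ must force the whole polynomial to be free of $\theta_{\rem}$. I will prove this by restricting to each $w$-degree, where the comparison is exactly a lex-elimination, and then citing the elimination theorem in its graded form (as in \citet{kreuzer2005computational}). Working simultaneously over all $q\in\theta_{\rem}$ gives the stated equivalence.
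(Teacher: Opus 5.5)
Your ($\Rightarrow$) direction is correct and is essentially the paper's argument. You reduce $a$ modulo $F\cap\mathbb{R}[\theta_{\id},\sigma,h]$, note that $\text{in}(f)=q\cdot\text{in}(a)$, and conclude that the dividing basis element has leading term $q\cdot m$. The only difference is technical: you stay with the graded order and argue inside homogeneous components, whereas the paper observes that $F$ is also a reduced Gröbner basis for the underlying non-graded lex-elimination order and applies the elimination theorem there. This direction works for every $q\in\theta_{\rem}$, wherever $q$ sits in the lex order.

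The gap is in ($\Leftarrow$). You assume $q$ is the lowest-priority variable of $\theta_{\rem}$. The theorem, however, fixes one order and one basis $F$ and makes a claim about \emph{every} $q\in\theta_{\rem}$, and only one variable can be lowest. (Choosing $q$ smallest is what Corollary~\ref{cor:degree} and Algorithm~\ref{alg:degree-bounded-identification} do, not what the theorem allows.) Suppose the lex order on $\theta_{\rem}$ is $x_1\succ\cdots\succ x_r$ and $q=x_j$ with $j<r$. Then every monomial whose $\theta_{\rem}$-part is a monomial of any degree in $x_{j+1},\dots,x_r$ is smaller than $q$, so such terms may occur in the tail of $g$. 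Your step 2 therefore fails. You only get $g=q\,a-b$ with $a\in\mathbb{R}[\theta_{\id},\sigma,h]$ but $b\in\mathbb{R}[\theta_{\id},x_{j+1},\dots,x_r,\sigma,h]$, so Lemma~\ref{lem:chracterization-identifiability-hom} cannot be applied with the original $\theta_{\id}$, and the closing ``working simultaneously over all $q$'' does not follow.

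The missing ingredient is an induction along the lex order from the smallest variable upward. Your argument handles $x_r$. For $x_j$, assume $x_{j+1},\dots,x_r$ are already certified and apply the lemma with $\theta_{\id}\cup\{x_{j+1},\dots,x_r\}$ in place of $\theta_{\id}$. Your step 3 still shows $a\notin\mathcal{I}^{\wh}$, because it only uses $\text{in}(g)=q\cdot\text{in}(a)$ and reducedness. Since $a$ contains no $\theta_{\rem}$-variables, this gives $a\notin\mathcal{I}^{\wh}\cap\mathbb{R}[\theta_{\id},x_{j+1},\dots,x_r,\sigma,h]$. This is precisely how the paper concludes: $q$ is rationally identifiable provided all smaller variables of $\theta_{\rem}$ are.
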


Of course, by Lemma~\ref{lem:chracterization-identifiability-hom}, whenever the Gröbner basis from Theorem~\ref{thm:id-with-gb} contains an identifying polynomial $q  a(\theta_{\id}, \sigma, h) - b(\theta_{\id}, \sigma, h)$, then the parameter $q \in \theta$ is rationally identifiable. Now, a  key observation is that if there is a parameter $q \in \theta_{\rem}$ that is rationally identifiably, then it is usually not necessary to compute the full Gröbner basis $F$ in Theorem~\ref{thm:id-with-gb}. Instead we compute the elements in reduced Gröbner basis $F$ degree by degree. As soon as we obtain an homogenized identifying polynomial for some $q \in \theta_{\rem}$, we can stop the computation. The following corollary certifies that we will find polynomials that have the lowest possible degree with such a strategy.

\begin{corollary} \label{cor:degree}
Suppose that a graph $G=(V,D, B)$ is rationally identifiable. Let $\prec$ be a total order on $\theta$ and let $P_{\prec} \subseteq \mathbb{R}[\theta,\sigma]$ be a set of identifying polynomials such that $\text{ID}_G = \deg(P_{\prec})$. Let $q \in \theta$ and suppose that the set of preceding variables is known to be rationally identifiable, that is, $\{s \in \theta: s \prec q\} = \theta_{\id}$. Moreover, suppose that the $w$-graded lex-elimination order for $\theta_{\rem}$ is such that $q$ is the smallest variable among the $\theta_{\rem}$-variables. Then, the Groebner basis $F$ from Theorem~\ref{thm:id-with-gb} contains a polynomial $g_{q} := q a(\theta_{\id}, \sigma,h) - b(\theta_{\id}, \sigma, h)$ that satisfies $\deg_w(g_{q}) \leq \text{ID}_G \cdot w_{\trek}$.
\end{corollary}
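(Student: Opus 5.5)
Start from the identifying polynomial $g = q\,a(\theta_{\id},\sigma) - b(\theta_{\id},\sigma) \in P_{\prec}$, which exists by assumption and has total degree at most $\text{ID}_G$. Pass to its weighted homogenization $g^{\wh}$ with respect to the trek weighting. Every variable in $\theta$ has weight $1$, and every $\sigma_{uv}$ has weight at most $w_{\trek}$. Hence each monomial of total degree $k$ has $w$-degree at most $k\,w_{\trek}$, and so $\deg_w(g^{\wh}) \le \text{ID}_G \cdot w_{\trek}$. Because $g\in\mathcal{I}$, the basic homogenization facts give $g^{\wh}\in\mathcal{I}^{\wh}$. Writing $g^{\wh} = q\,\tilde a - \tilde b$ with $\tilde a,\tilde b\in\mathbb{R}[\theta_{\id},\sigma,h]$, we have $\tilde a^{\deh}=a\notin \mathcal{I}\cap\mathbb{R}[\theta_{\id},\sigma]$. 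By Proposition~\ref{prop:hom-equal}(ii), $\tilde a\notin \mathcal{I}^{\wh}\cap\mathbb{R}[\theta_{\id},\sigma,h]$. Here $\tilde a$ is a $w$-homogeneous component, or $h$-power multiple of the homogenization of $a$; this is routine to check.

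Next, reduce $g^{\wh}$ modulo the Gröbner basis. Note first that $\tilde a$ may be replaced by its normal form with respect to $\mathcal{I}^{\wh}\cap\mathbb{R}[\theta_{\id},\sigma,h]$, which is nonzero. Since the order is a $w$-graded elimination order for $\theta_{\rem}$ with $q$ smallest among the $\theta_{\rem}$ variables, the leading monomial of $q\tilde a - \tilde b$ is $q$ times a monomial of $\tilde a$. Indeed, all its terms have equal $w$-degree, and monomials containing $q$ beat those free of $\theta_{\rem}$. Concretely, I would choose $g^{\wh}$ among all elements of $\mathcal{I}^{\wh}$ of the form $q\tilde a-\tilde b$ with $\tilde a\notin\mathcal{I}^{\wh}$ and $w$-degree $\le \text{ID}_G w_{\trek}$ so that its leading term is minimal. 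Its leading term $q\cdot m$ lies in $\text{in}(\mathcal{I}^{\wh})$, so some $f\in F$ has leading term dividing $q m$. Since $F$ is homogeneous and reduced and the order is $w$-graded, $\deg_w f\le \deg_w g^{\wh}$.

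The main step is to show that such an $f$ has the form $q a'(\theta_{\id},\sigma,h)-b'$ with $a'$ not in the elimination ideal. There are two cases. First, suppose $\text{in}(f)$ is free of $q$. Then $\text{in}(f)=\text{in}(f')$ divides $m\in\mathbb{R}[\theta_{\id},\sigma,h]$, and by the elimination property $f\in\mathbb{R}[\theta_{\id},\sigma,h]$. Subtracting $q\cdot(m/\text{in}f)\cdot f$ from $g^{\wh}$ then produces another valid element with smaller leading term. This needs a check that the new $\tilde a$ is still outside the elimination ideal, which holds because we subtracted an element of that ideal. That contradicts minimality. Second, $\text{in}(f)=q\,m'$. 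Then the elimination property, with $q$ the smallest $\theta_{\rem}$ variable and $q$ occurring linearly in the leading term, forces every term of $f$ to lie in $q\cdot\mathbb{R}[\theta_{\id},\sigma,h]+\mathbb{R}[\theta_{\id},\sigma,h]$. So $f=q a'-b'$, and $a'\ne0$ is not in the elimination ideal because its leading term $m'$ is reduced with respect to $F$, as $F$ is reduced and a Gröbner basis of the elimination ideal is contained in $F$.

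I expect the obstacle to be exactly this last reducedness argument: justifying that $a'\notin\mathcal{I}^{\wh}\cap\mathbb{R}[\theta_{\id},\sigma,h]$. The plan is to argue through leading terms, using that $F\cap\mathbb{R}[\theta_{\id},\sigma,h]$ is a Gröbner basis of the elimination ideal under the lex-elimination structure, with the argument mirroring Theorem~\ref{thm:id-with-gb}.
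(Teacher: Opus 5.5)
Your proposal is correct and follows essentially the same route as the paper. You homogenize the identifying polynomial of degree at most $\text{ID}_G$, bound its weighted degree by $\text{ID}_G \cdot w_{\trek}$, and then run the leading-term argument from the proof of Theorem~\ref{thm:id-with-gb}: normal form of $\tilde a$ modulo the elimination ideal, an element $f \in F$ with $\text{in}(f)$ dividing $q\cdot\text{in}(\tilde a)$, lex-minimality of $q$ to get $f = qa'-b'$ with $a',b'$ free of $\theta_{\rem}$, and $w$-gradedness for the degree bound. You write that argument out instead of citing it, and your minimal-leading-term device is redundant once $\tilde a$ is in normal form (if you keep it, restrict to $w$-homogeneous candidates), but nothing essential is missing.
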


Before describing our full algorithm to verify rational identifiability of a graph $G$, we study in the next section the complexity of degree-bounded Gröbner basis computations.

\section{Degree-Bounded Identification via Adaptive Orderings} \label{sec:main-result}

A mixed graph is rationally identifiable if and only if all parameters are rationally identifiably. We now propose an algorithm that recursively certifies parameters to be rationally identifiable. At the beginning we let $\theta_{\id} = \emptyset$  and we compute all elements up to a small total degree in the reduced Gröbner basis $F$ of the ideal $\mathcal{I}^{\wh}$ with respect to a $w$-graded lex-elimination order. If $F$ contains a homogenized identifying polynomial for some $q \in \theta_{\rem}$, we add $q$ to the set $\theta_{\id}$ and remove it from the set $\theta_{\rem}$. We then refine our $w$-graded lex-elimination order accordingly, and repeat the procedure until we find the next variable that is certified to be rationally identifiable. This procedure is formalized in Algorithm~\ref{alg:degree-bounded-identification}.

\begin{algorithm}[t]
\caption{Degree-bounded identification in a mixed graph}
\begin{algorithmic}[1]
\REQUIRE Generators of the ideal $\mathcal{I}^{\wh} \subseteq \mathbb{R}[\theta, \sigma, h]$, maximal degree $d \geq 2$.
\ENSURE 
Let $\theta_{\id} = \emptyset$ be the rationally identifiable parameters and set $\theta_{\rem} = \theta \setminus \theta_{\id}$. Let $w$ be the trek-weighting. 
\STATE Let $d'=d \cdot w_{\trek}$ where $w_{\trek}$ is the length of the maximal length of a trek in $G$. 
\REPEAT
\FOR {$k = 1, \ldots, d'$}
\FOR {$q \in \theta_{\rem}$}
    \STATE Let $\prec$ be a $w$-graded lex-elimination order for $\theta_{\rem}$ s.t.~$q$ is the smallest element of $\theta_{\rem}$.
    \STATE Using a degree-bounded Gröbner basis algorithm, compute the reduced Gröbner basis $F_{\leq k}$ up to the weighted degree $k$ with respect to the order $\prec$.
    \FOR {$q \in \theta_{\rem}$}
        \IF {$q a(\theta_{\id}, \sigma,h) - b(\theta_{\id}, \sigma,h) \in F_{\leq k}$}
            \STATE $\theta_{\id} = \theta_{\id} \cup \{q\}$ and $\theta_{\rem} = \theta_{\rem} \setminus \{q\}$.
            \BREAK { all for-loops}
        \ENDIF
    \ENDFOR
\ENDFOR
\ENDFOR
\UNTIL{$\theta_{\id} = \theta$ or no change has occurred in the last iteration.}
\RETURN ``yes'' if $\theta_{\id} = \theta$, ``no'' otherwise.
\end{algorithmic}
\label{alg:degree-bounded-identification}
\end{algorithm}

\begin{theorem} \label{thm:main-result}
Let $G=(V,D,B)$ be an acyclic mixed graph. If $G$ is rationally identifiable  and $d \geq \text{ID}_G$,  then  Algorithm~\ref{alg:degree-bounded-identification}  returns ``yes''. If $G$ is not rationally identifiable, then Algorithm~\ref{alg:degree-bounded-identification} returns ``no'' for all $d \in \mathbb{N}$. The computational complexity of Algorithm~\ref{alg:degree-bounded-identification} is of order 
\[
O\left( d |V|^5 \{3d|V|\}^{4\alpha d^2\log(3d|V|)}\right),
\]
where $\alpha$ is a constant such that row reduction of a $n \times n$ matrix can be performed in $O(n^{\alpha})$ operations. Moreover, if $w_{\trek} \geq 2d \log(3d |V|)$, then the complexity of Algorithm~\ref{alg:degree-bounded-identification} is $ O(d|V|^5)$, and if $w_{\trek} \leq C$ for an absolute constant $C$, then the complexity of Algorithm~\ref{alg:degree-bounded-identification} is $O( d |V|^5 \{3d|V|\}^{2 \alpha d C})$.
\end{theorem}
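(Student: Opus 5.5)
The proof splits into correctness (two directions) and a complexity count.

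\emph{Soundness (``no'' whenever $G$ is not rationally identifiable).} By Lemma~\ref{lem:chracterization-identifiability-hom}, every parameter added to $\theta_{\id}$ really is rationally identifiable. Suppose the algorithm finds an element $qa-b\in F_{\leq k}$ whose leading term is $q\cdot a(\theta_{\id},\sigma,h)$. Since $F$ is reduced, $a\notin \mathcal{I}^{\wh}\cap\mathbb{R}[\theta_{\id},\sigma,h]$; the argument is the one in the proof of Theorem~\ref{thm:id-with-gb}, and I would reuse it verbatim. So $q$ is rationally identifiable. By induction along the order in which parameters are added, $\theta_{\id}$ always consists of rationally identifiable parameters. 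Hence $\theta_{\id}=\theta$ forces $G$ to be rationally identifiable. Contrapositively, if $G$ is not rationally identifiable, the algorithm outputs ``no'' for every $d$.

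\emph{Completeness.} Fix an identifying order $\prec^*$ and a set of identifying polynomials $P_{\prec^*}$ with $\deg(P_{\prec^*})=\text{ID}_G\le d$. The key claim is this: as long as $\theta_{\rem}\neq\emptyset$, some $q\in\theta_{\rem}$ is found in the current repeat iteration. Take $q$ to be the $\prec^*$-minimal element of $\theta_{\rem}$. Then every $s\prec^* q$ lies in $\theta_{\id}$, although $\theta_{\id}$ may contain more than these. Enlarging $\theta_{\id}$ only enlarges $\mathbb{R}[\theta_{\id},\sigma]$, and the nondegeneracy condition on $a$ persists, since $\mathcal I\cap\mathbb{R}[\theta',\sigma]\subseteq \mathcal I\cap\mathbb{R}[\theta_{\id},\sigma]$ for $\theta'\subseteq\theta_{\id}$. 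So the hypotheses of Corollary~\ref{cor:degree} hold, with ``preceding variables'' replaced by a superset; I would check that its proof tolerates this.

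When the inner loop reaches this $q$ and orders it smallest in $\theta_{\rem}$, Corollary~\ref{cor:degree} gives a basis element $g_q$ with $\deg_w(g_q)\le \text{ID}_G\, w_{\trek}\le d'$. Hence $g_q\in F_{\le k}$ for $k=d'$. The algorithm may break earlier on a different parameter, but it always gains at least one parameter per repeat iteration. After at most $|\theta|$ iterations it therefore reaches $\theta_{\id}=\theta$ and returns ``yes''.

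\emph{Complexity.} The number of repeat iterations is at most $|\theta|\le |D|+|B|+|V|=O(|V|^2)$. Each iteration runs over $k\le d'$ and $q\in\theta_{\rem}$, that is $O(d'|V|^2)$ calls. One degree-bounded call up to degree $k$ costs at most a call up to degree $d'$. Altogether there are $O(d\,w_{\trek}|V|^4)$ calls, and $w_{\trek}\le 2|V|$ (a trek has at most $|V|-1$ edges per side plus one $\omega$) absorbs into the $|V|^5$ factor after bookkeeping.

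Each call is bounded by Proposition~\ref{prop:deg-bounded-gb-complexity}, with $m=|\theta|+|\sigma|+1=O(|V|^2)$ variables and $d_w=d\,w_{\trek}$. Apply the weight factor $\prod w_i\ge \prod_{u\le v} w(\sigma_{uv})$ and bound the binomial by $\binom{m+d_w-1}{d_w}\le (m+d_w)^{d_w}$ or by $(m+d_w)^{m}$, whichever is smaller. The resulting quantity is a competition between the weights in the denominator and the degree $d\,w_{\trek}$ in the numerator. Converting it into the stated form $\{3d|V|\}^{4\alpha d^2\log(3d|V|)}$ is the main obstacle. My plan is to write $\binom{m+d_w-1}{d_w}\le (e(m+d_w)/d_w)^{d_w}$, with $m+d_w\le 3d|V|^2$ roughly, and lower-bound $\prod w(\sigma_{uv})$ by exploiting that many pairs have long treks whenever $w_{\trek}$ is large. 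Then I would split into cases. If $w_{\trek}\ge 2d\log(3d|V|)$, the denominator dominates, so the per-call cost is $O(1)$ and the total is $O(d|V|^5)$. If $w_{\trek}\le C$, the numerator gives $(3d|V|)^{2\alpha dC}$. Interpolating between the regimes, that is maximizing over $w_{\trek}\le 2d\log(3d|V|)$, yields the quasi-polynomial bound with exponent $4\alpha d^2\log(3d|V|)$. I expect this weight-product estimate to require the most care and possibly some slack in the constants.
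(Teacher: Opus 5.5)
Your correctness argument is sound and follows the paper: soundness via the reducedness argument of Theorem~\ref{thm:id-with-gb} and Lemma~\ref{lem:chracterization-identifiability-hom}, completeness via Corollary~\ref{cor:degree}. It is in fact more careful than the paper's two-line version. You explicitly handle that $\theta_{\id}$ may strictly contain the $\prec^*$-predecessors of $q$, and that the algorithm may break on a different parameter.

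There is one slip. The inclusion $\mathcal I\cap\mathbb{R}[\theta',\sigma]\subseteq\mathcal I\cap\mathbb{R}[\theta_{\id},\sigma]$ does not by itself give persistence of the nondegeneracy condition. The right reason is that $a\in\mathbb{R}[\theta',\sigma]$ and $a\notin\mathcal I\cap\mathbb{R}[\theta',\sigma]$ together mean $a\notin\mathcal I$ at all. Your call count $d\,w_{\trek}|\theta|^2=O(d|V|^5)$ matches the paper's.

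The genuine gap is the weight-product estimate, and it affects the headline bound, not just the ``moreover'' clause.
\begin{itemize}
\item Dropping the denominator and using $\binom{m+d_w-1}{d_w}\le(3d|V|^2)^{dw_{\trek}}\le(3d|V|)^{2dw_{\trek}}$ handles $w_{\trek}\le C$ and $w_{\trek}\le 2d\log(3d|V|)$.
\item But $w_{\trek}$ can be as large as $2|V|-1$, and there this bound is exponential.
\item So the quasi-polynomial claim for \emph{all} graphs rests on proving that the denominator cancels the numerator when $w_{\trek}\ge 2d\log(3d|V|)$. You only assert this, and your sharper bound $(e(m+d_w)/d_w)^{d_w}$ does not avoid the need for it.
\end{itemize}

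Two ingredients are needed.
\begin{enumerate}
\item[(i)] Lemma~\ref{lem:superfactorial}, $\bigl(\prod_{u\le v}w(\sigma_{uv})\bigr)^4\ge\text{sf}(w_{\trek})$. Take a longest trek with positions $v_1,\dots,v_{w_{\trek}+1}$. The sub-walk from $v_s$ to $v_{s+w_{\trek}-i+1}$, for $1\le s\le i\le w_{\trek}$, is a trek of length at least $w_{\trek}-i+1$. Each $\sigma_{uv}$ arises from at most four position pairs, since a node occurs at most twice on a trek. Finally, $\prod_{i=1}^{w_{\trek}}(w_{\trek}-i+1)^i=\text{sf}(w_{\trek})$.
\item[(ii)] The asymptotics $\log\text{sf}(n)=\tfrac12 n^2\log n-\tfrac34 n^2+O(n)$. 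These give $\tfrac14\log\text{sf}(w_{\trek})\ge 2d\,w_{\trek}\log(3d|V|)$ as soon as $w_{\trek}\ge 2d\log(3d|V|)$ and $\log w_{\trek}$ exceeds an absolute constant (Lemmas~\ref{lem:superfactorial-approx} and~\ref{lem:helper}).
\end{enumerate}
With these two facts in place, your three-way case split goes through exactly as in the paper.
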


Theorem~\ref{thm:main-result} certifies that, for fixed input degree $d$, Algorithm~\ref{alg:degree-bounded-identification} is quasi-polynomial time in the number of nodes of the graph. If the maximal trek length is either bounded by a constant or grows fast enough with the number of nodes, then the algorithm is polynomial time. The key ingredient to the proof of Theorem~\ref{thm:main-result} is the next lemma. It relates the product of all trek weights to the superfactorial. 

\begin{lemma} \label{lem:superfactorial}
    Let $G=(V,D,B)$ be an acyclic mixed graph. Consider the trek-weighting $w$ defined in~\eqref{eq:trek-weighting}  and let $w_{\trek}$ be the maximum length of any trek in $G$. Then, we have
    \[
        \left(\prod_{u \leq v} w(\sigma_{uv})\right)^4 \geq \text{sf}(w_{\trek}) ,
    \]
    where $\text{sf}(n) = \prod_{i=1}^n i!$ is the $n$-th superfactorial for $n \in \mathbb{N}$.
\end{lemma}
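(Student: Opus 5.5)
The plan is to take one longest trek $\pi$ in $G$ and compare the weights of the $\sigma$-variables indexed by pairs of nodes on $\pi$ with the subtreks of $\pi$ connecting them. Write $L := w_{\trek}$ for the degree of $\pi(\theta)$. We list the nodes of $\pi$ in the order they are traversed as $z_1, \ldots, z_N$.

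\emph{Bidirected case.} If $\pi$ has the form $v_l \leftarrow \cdots \leftarrow v_1 \leftrightarrow u_1 \rightarrow \cdots \rightarrow u_r$, then $N = l+r$. The monomial $\pi(\theta)$ has one $\omega$-factor and $l+r-2$ $\lambda$-factors, so $L = l+r-1$, which is the number of edges of $\pi$.

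Take indices $a<b$. The contiguous subwalk $z_a, \ldots, z_b$ has no colliding arrowheads, so it is itself a trek between $z_a$ and $z_b$.
\begin{itemize}
\item If the subwalk contains the bidirected edge, its monomial has degree equal to its number of edges, namely $b-a$.
\item If the subwalk is purely directed, it is a trek with top node at one end. Its monomial has degree $(b-a)+1$, because a factor $\omega_{zz}$ is added.
\end{itemize}
Since $w(\sigma_{z_a z_b})$ is the maximal trek length between $z_a$ and $z_b$, in both cases $w(\sigma_{z_a z_b}) \geq b-a$. For each $k$, exactly $N-k = L+1-k$ index pairs $(a,b)$ satisfy $b-a=k$. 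Therefore
\[
\prod_{1\le a<b\le N} w(\sigma_{z_a z_b}) \;\geq\; \prod_{k=1}^{L} k^{L+1-k} \;=\; \prod_{i=1}^{L} i! \;=\; \text{sf}(L),
\]
where the middle equality holds because $k$ divides $i!$ exactly when $i \geq k$.

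\emph{Case $v_1 = u_1$.} Here $N = l+r-1$ and $L = N$. A subwalk $z_a,\ldots,z_b$ with $a<b$ has degree at least $(b-a)+1$: this is exactly $(b-a)+1$ if the walk is purely directed, and larger if it contains the top node in its interior. So $w(\sigma_{z_a z_b}) \geq b-a+1$, and
\[
\prod_{a<b} w(\sigma_{z_a z_b}) \;\geq\; \prod_{k=2}^{L} k^{L+1-k} \;=\; \text{sf}(L),
\]
since a factor $k=1$ would contribute only $1$.

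\emph{Handling repeated nodes.} The index pairs $(a,b)$ must be converted into distinct unordered pairs $u \le v$. The nodes $v_1,\ldots,v_l$ are pairwise distinct, and so are $u_1,\ldots,u_r$, so every node of $G$ occurs at most twice among $z_1,\ldots,z_N$. Consequently, a fixed unordered pair $\{x,y\}$ arises from at most $2\cdot 2 = 4$ index pairs $(a,b)$. This includes the diagonal case $x=y$, which is realized by the variable $\sigma_{xx}$.

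All weights satisfy $w(\sigma_{uv}) \geq 1$, so the pairs $u \le v$ that do not occur on $\pi$ only increase the right-hand side. Combining this with the multiplicity bound gives
\[
\text{sf}(w_{\trek}) \;\leq\; \prod_{a<b} w(\sigma_{z_a z_b}) \;\leq\; \Bigl(\prod_{u\le v} w(\sigma_{uv})\Bigr)^4 .
\]

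The step most likely to be the obstacle is the bookkeeping for the degree of subtreks. One has to confirm that every contiguous piece of $\pi$ is a genuine trek in the paper's sense, which requires distinct nodes on each side of the top, and this is inherited from $\pi$. One also has to confirm that its degree is at least $b-a$ in every configuration, so that the exponent count gives the superfactorial exactly. The multiplicity factor $4$ is exactly what absorbs the possible overlap between $\{v_i\}$ and $\{u_j\}$.
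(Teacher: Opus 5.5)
Your proof is correct and follows essentially the same route as the paper. Both arguments fix a longest trek, bound the weight of each pair of nodes on it by the index distance along the connecting subtrek, count $L+1-k$ pairs at distance $k$ to obtain $\prod_k k^{L+1-k}=\text{sf}(L)$, and absorb repeated nodes with the factor $4$. The difference is cosmetic: the paper lists the top node twice in the case $v_1=u_1$ so that both trek forms share one indexing, while you handle the two forms separately. One small slip is that a subtrek passing through the top node has degree exactly $b-a+1$, not larger, but this does not affect your bound.
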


The proof of Theorem~\ref{thm:id-with-gb} is based on Lemma~\ref{lem:superfactorial} and the asymptotic behavior of the superfactorial, which is well understood. In particular, we relate the asymptotic behavior of $\text{sf}(w_{\trek})$ to $|V|^{w_{\trek}}$, depending on how fast $w_{\trek}$ growths with the number of nodes $|V|$.

\begin{remark} \label{rem:bounding-IDG}
    If it is possible to bound $\text{ID}_G$ by a constant, then our procedure provides a quasi-polynomial time algorithm to check rational identifiability. 
\end{remark}

\begin{remark}
    We can further reduce the complexity of checking rational identifiability via a graph composition due to Tian \citep{tian2005identifying}. For an acyclic mixed graph $G=(V,D,B)$, denote by $C_1, \ldots, C_k$ the pairwise disjoint node sets of the connected components of the bidirected part $(V,B)$. Let $B_i=B \cap (C_i \times C_i)$ be the corresponding bidirected edges in the $i$-th component for all $i \in [k]$. Moreover, for all $i \in [k]$, let $V_i=C_i \cup \{\pa(v): v \in C_i\}$ be the  nodes obtained as a union of $C_i$ and all parents of nodes in $C_i$. Finally, let $D_j = D \cap (V_j \times C_j)$ be the set of directed edges that point from a node $v \in V_j$ to a node $c \in C_j$. Then, the \emph{Tian decomposition} of $G$ is given by the \emph{mixed components} $G_j=(V_j, D_j,B_j)$ for $j=1, \ldots, k$; see \citet[Section 8]{foygel2012halftrek} for examples. Note that the node sets $V_1, \ldots, V_k$ need not be pairwise disjoint, but the mixed components give a partition of the edges of $G$. Crucially, an acyclic mixed graph $G$ is rationally identifiable if and only if all of its mixed components $G_1, \ldots, G_k$ are rationally identifiable \citep{tian2005identifying, foygel2012halftrek}. Therefore, when checking identifiability via Algorithm~\ref{alg:degree-bounded-identification}, we always first compute the Tian decomposition, which has low computational complexity. We then apply Algorithm~\ref{alg:degree-bounded-identification} to each component separately. Note that  if there are many mixed components in large graphs then this provides a significant reduction of the worst-case complexity stated in Theorem~\ref{thm:main-result}. 
\end{remark}

\sloppy 
We implemented Algorithm~\ref{alg:degree-bounded-identification} in the open-source computer algebra software \texttt{Macaulay2} \citep{M2}. Our code is available on GitHub: \url{https://github.com/NilsSturma/DegBoundedIdentifiability} and builds on top of the \texttt{GraphicalModels} package \citep{GraphicalModelsSource}. In our implementation, we also return the dehomogenized identifying polynomials. Moreover, we stop the algorithm as soon as identifying polynomials for all variables in $\lambda$ have been found since the identifying polynomials for the remaining variables in $\omega$ are then given via the formula in Remark~\ref{rem:lambdas-enough}. 

\begin{example}
The following \texttt{M2} code uses our implementation of Algorithm \ref{alg:degree-bounded-identification} to verify rational identifiability of the graph in Figure~\ref{fig:another-example}. 
\begin{verbatim}
load ("DegBoundedIdentification.m2")
M = {
    {
        {0, 1, 0, 1}, 
        {0, 0, 0, 0}, 
        {0, 0, 0, 1}, 
        {0, 0, 0, 0}
    }, 
    {
        {0, 0, 0, 0}, 
        {0, 0, 1, 0}, 
        {0, 1, 0, 1}, 
        {0, 0, 1, 0}
    }
}
n=4
(A1, A2) = toSequence(M / matrix);
D = digraph(toList(1..n), A1);
B = bigraph(toList(1..n), A2);
G = mixedGraph(D, B);
degBd = 2;
maxTime = 10;
tian = true;
DegBoundedIdentification(G, degBd, maxTime, tian)
\end{verbatim}
\end{example}

Lastly, we note that one might improve the implementation of our algorithm by parallelizing the for-loop in line 4 of Algorithm~\ref{alg:degree-bounded-identification}. We leave this open for future work.

\section{Numerical Experiments} \label{sec:experiments}

\begin{figure}[t]
\centering
\includegraphics[width=0.5\linewidth]{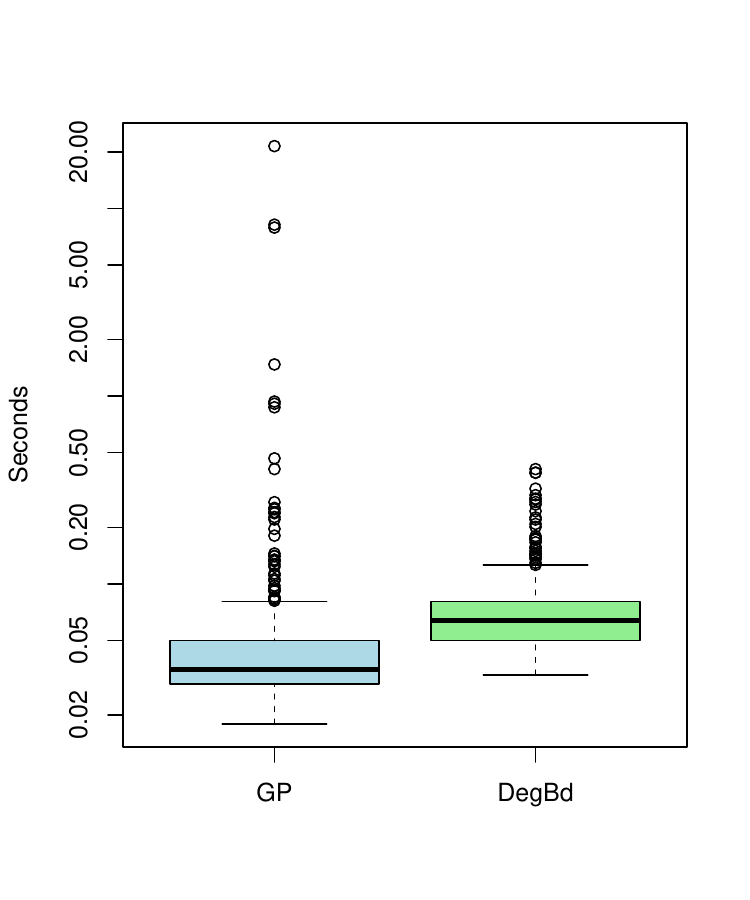}
\caption{Boxplots of computing times of the Garcia-Puente algorithm (GP) and Algorithm~\ref{alg:degree-bounded-identification} (DegBd) for certifying rationally identifiable of acyclic mixed graphs on $4$ nodes. Note that the scale is logarithmic.}
\label{fig:boxplot}
\end{figure}

We conduct two small simulation studies to demonstrate the practical applicability of Algorithm~\ref{alg:degree-bounded-identification}. We compare our algorithm to the standard method for deciding identifiability using computational algebra developed by~\citet{garcia2010identifying}. All computations were performed on a MacBook Pro (Apple M4 chip, 10-core CPU, 24 GB RAM) using the computer algebra software \texttt{Macaulay 2}, version 1.25.06; see \url{https://github.com/NilsSturma/DegBoundedIdentifiability} for the corresponding code. \\

In the first study, we consider all acyclic mixed graphs on $|V|=4$ nodes with at most $\binom{|V|}{2}=6$ edges. Note that any mixed graph with more than $\binom{|V|}{2}$ edges is trivially not generically identifiable since the number of parameters exceeds the dimension of the set of covariance matrices; compare \citet[Proposition 2]{foygel2012halftrek}. In total, there are $715$ such mixed graphs on $4$ nodes. For such graphs, the method of \citet{garcia2010identifying} that checks identifiability by computing the whole Gröbner basis in a standard way is computationally feasible. Recall that this method verifies rational identifiability as in Proposition~\ref{prop:id-with-gb}. We find that in total $343$ graphs are rationally identifiable. Using  Algorithm~\ref{alg:degree-bounded-identification} with maximal degree $d=5$, we certify the same $343$ graphs to be rationally identifiable. For those 343 graphs, Figure~\ref{fig:boxplot} shows two histograms of computation times that were needed to certify rational identifiability. On $4$ nodes, most times the Garcia-Puente algorithm computes the Gröbner basis immediately. However, there are already a handful of graphs were this method already takes considerably longer time, up to $20$ seconds. In contrast, our method via degree-bounded Gröbner bases is very fast for all graphs on $4$ nodes. The slight difference in the median runtime is mainly due to the preprocessing steps we do for Algorithm~\ref{alg:degree-bounded-identification}, such as computing the trek length grading. For larger examples, this preprocessing is negligible. Note that we were not able to carry-out the Garcia-Puente algorithm on all graphs on $m=5$ nodes. There exist more than $100,000$ such graphs and for many of them the Garcia-Puente algorithm does not finish within reasonable time. \\

\begin{table*}[t]\centering 
\begin{tabular}{@{}c|rrrrrr@{}}\toprule
\# Edges & Total & GP-ID & Time GP & DegBd-ID & Time DegBd  \\ 
\midrule
7 & 2 & 1 & 0.15 & 1 & 0.13 \\
9 & 6 & 4 & 0.71 & 5 & 0.19 \\
10 & 8 & 7 & 0.69 & 7 & 0.17 \\
11 & 22 & 13 & 0.95 & 13 & 0.17 \\
12 & 30 & 19 & 1.11 & 25 & 0.19 \\
13 & 43 & 19 & 1.48 & 31 & 0.22 \\
14 & 69 & 21 & 2.49 & 53 & 0.24 \\
15 & 82 & 24 & 1.25 & 65 & 0.27 \\
16 & 91 & 12 & 3.34 & 64 & 0.32 \\
17 & 106 & 5 & 1.27 & 75 & 0.41 \\
18 & 103 & 5 & 1.09 & 66 & 0.54 \\
19 & 91 & 2 & 3.19 & 63 & 0.72 \\
20 & 100 & - & - & 68 & 0.99 \\
21 & 71 & - & - & 46 & 1.53 \\
22 & 56 & - & - & 39 & 2.13 \\
23 & 43 & - & - & 30 & 1.74 \\
24 & 26 & - & - & 18 & 3.36 \\
25 & 18 & - & - & 10 & 3.81 \\
26 & 11 & - & - & 6 & 5.09 \\
27 & 10 & - & - & 8 & 3.97 \\
28 & 5 & - & - & 3 & 4.70 \\
29 & 4 & - & - & - & - \\
31 & 3 & - & - & 1 & 8.44 \\
\bottomrule
\end{tabular}
\caption{Counts and average computing time of $1000$ randomly sampled graphs on $10$ nodes. We compare the Garcia-Puente algorithm (GP) with Algorithm~\ref{alg:degree-bounded-identification} (DegBd).}
\label{table:randomly-generated}
\end{table*}

Since differences in computing time are greater on larger graphs, we consider a second experimental setup. We randomly generate $1000$ acyclic mixed graphs on $m=10$ nodes. Both directed and bidirected edges are sampled from an Erd\H{o}s--R\'enyi model with edge probability $0.2$. Again, we set the maximal degree in Algorithm~\ref{alg:degree-bounded-identification} to $d=5$. Since the Garcia-Puente algorithm does not finish on a large fraction of the sampled graphs within reasonable time, we set a maximal time limit for both algorithms to $10$ seconds. Said differently, we compare how many graphs each of the methods is able to identify if we allow a maximal computing time of $10$ seconds. Note that in our experience, either the Gröbner basis computations in the Garcia-Puente method finish immediately or almost ``never''. Table \ref{table:randomly-generated} reports the number of identified graphs and the average computational time needed to certify identifiability among the graphs that are rationally identifiable. While the GP-algorithm only certifies identifiability of $132$ graphs in total, Algorithm~\ref{alg:degree-bounded-identification} certifies $697$ graphs to be identifiable. Remarkably, for graphs with $20$ or more edges, the Garcia-Puente algorithm did not identify any graph while our method is still able to verify rational identifiability of many graphs within reasonable time. 

\section{Comparison to Existing Sufficient Criteria} \label{sec:comparison-to-existing-criteria}
In this section, we show that our algorithm subsumes many polynomial-time sufficient criteria in the literature under a bound on the maximal number of parents. This means that our algorithm certifies rational identifiability for a given input graph if $d$ is large enough whenever the given criteria certify it.  A caveat is that the criteria from the literature are polynomial time, while our algorithm is, in general, only quasi-polynomial time. On the other hand, our algorithm provably finds an identification formula up to degree $d$ whenever such a formula exists, while the criteria in the literature are only sufficient conditions. To show that Algorithm~\ref{alg:degree-bounded-identification} subsumes another criterion, observe that by Corollary~\ref{cor:degree} it suffices to show that the criterion yields an identifying order together with a set of identifying polynomials such that the maximum degree of the identifying polynomials is bounded.

\begin{assumption} \label{ass:bound-parents}
    There is a constant $s \geq 1$ such that $|\pa(v)| \leq s$ for all $v \in V$.
\end{assumption}

The Half-Trek Criterion by \citet[Theorem 1]{foygel2012halftrek} was a breakthrough in developing sufficient graphical criteria for rational identifiability that run in polynomial time. If a graph $G$ is certified to be rationally identifiable by recursively applying the Half-Trek Criterion, we say that $G$ is HTC-identifiable.

\begin{proposition} \label{prop:subsume-HTC}
    Let $G$ be HTC-identifiable and suppose that Assumption~\ref{ass:bound-parents} holds. Then there is an identifying order $\prec$ and a corresponding set of identifying polynomials $P_{\prec}$ such that $ \max_{g \in P_{\prec}}\allowbreak \deg(g) \allowbreak \leq 2s+1$.
\end{proposition}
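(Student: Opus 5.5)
The plan is to follow the HTC recursion node by node and turn the half-trek system of each node into explicit identifying polynomials via Cramer's rule. Recall the HTC of \citet{foygel2012halftrek}. A node $v$ is identified once there is a set $Y \subseteq V\setminus(\{v\}\cup \text{sib}(v))$ with $|Y| = |\pa(v)|$ satisfying two conditions. First, there is a system of half-treks with no sided intersection from $Y$ to $\pa(v)$. Second, every $y \in Y$ either has all its directed parents' edges already identified, or is not half-trek reachable from $v$.

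For each such $y$, define the linear form
\[
A_{y,\cdot} = \big[(I-\Lambda)^{\top}\Sigma\big]_{y,\cdot}
\]
if $y$ is already identified, and $A_{y,\cdot} = \Sigma_{y,\cdot}$ otherwise. The HTC proof shows that the vector $\lambda_{\pa(v),v}$ solves the linear system
\[
A_{Y,\pa(v)}\,\lambda_{\pa(v),v} = A_{Y,v},
\]
that is, $\tau_G\big(\sum_{p}A_{y,p}\lambda_{pv} - A_{y,v}\big)=0$ for each $y\in Y$. It also shows that $\det A_{Y,\pa(v)}$ is not in the kernel, because the half-trek system gives a nonvanishing term via the trek-rule/Lindström–Gessel–Viennot argument.

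We fix the order $\prec$ to follow the HTC recursion. All $\lambda$-parameters into the node identified at step $t$ come after those identified earlier, and within a node we order the parents arbitrarily. All $\omega$'s are placed last. Every entry of $A$ is a polynomial in $\theta_{\id}$ and $\sigma$. Each entry is either $\sigma_{yp}$ (degree $1$) or $\sigma_{yp}-\sum_{u\in\pa(y)}\lambda_{uy}\sigma_{up}$ (degree $2$). Hence every entry has degree at most $2$.

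For a parent $q=\lambda_{pv}$, Cramer's rule gives the polynomial
\[
g_q \;=\; q\,\det A_{Y,\pa(v)} \;-\; \det A^{(p)},
\]
where $A^{(p)}$ is obtained by replacing column $p$ of $A_{Y,\pa(v)}$ with $A_{Y,v}$. To see that $g_q\in\mathcal{I}$, note that it is a polynomial combination of the linear relations above, via the adjugate. To see that $a=\det A_{Y,\pa(v)}\notin\mathcal I\cap\mathbb R[\theta_{\id},\sigma]$, use the fact that $\tau_G(a)\neq0$. The bound $\deg\det\le 2|\pa(v)|\le 2s$ by Assumption~\ref{ass:bound-parents} yields $\deg g_q\le 2s+1$.

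Cramer's rule uses only $\lambda$'s into earlier-identified nodes $y$. These precede $q$ in the order, but only if the recursion identifies nodes in a consistent sequence, which it does by construction. For the $\omega$-parameters, the relation $\omega_{uv} - [(I-\Lambda)^\top\Sigma(I-\Lambda)]_{uv}$ of Remark~\ref{rem:lambdas-enough} has degree at most $3$. Here $a=1$, so $a$ is nonzero and not in the ideal, and $3\le 2s+1$ since $s\ge1$.

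The main obstacle is verifying carefully that $g_q\in\mathcal I$ and that the determinant is not in the kernel. Both are essentially contained in the proof of Theorem 1 of \citet{foygel2012halftrek}. I would cite that proof and only check that the polynomial identities hold symbolically, i.e.\ before evaluating at identified parameter values.
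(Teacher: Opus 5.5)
Your proposal is correct and follows the paper's proof essentially step for step. Like the paper, you apply Cramer's rule to the HTC linear system with rows $[(I-\Lambda)^{\top}\Sigma]_{y,\cdot}$ or $\Sigma_{y,\cdot}$, bound the entries by degree $2$ to get $\deg g_q\le 2s+1$, and place the $\omega$'s last using the degree-$3$ relations from Remark~\ref{rem:lambdas-enough}.

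One point to tidy is your row rule. You should put the $(I-\Lambda)^{\top}\Sigma$ rows exactly on $y\in Y\cap\mathrm{htr}(v)$, all of which are already identified, rather than on every identified $y$; the degree bound is unaffected. As I recall the HTC proof, its matrix uses this split, so with it you can cite the nonvanishing of $\det A$ directly instead of checking that the argument also covers your split.
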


Next, we consider the Instrumental Cutset criterion by \citep[Theorem 5.1]{kumor2019efficient}. It also runs in polynomial time but only applies to acyclic graphs. A mixed graph $G=(V,D,B)$ is acyclic if the induced directed graph $(V,D)$ is acyclic.  If a graph $G$ is certified to be rationally identifiable by recursively applying the Instrumental Cutset criterion, we say that $G$ is IC-identifiable.

\begin{proposition} \label{prop:subsume-IC}
Let an acyclic mixed graph $G$ be IC-identifiable and suppose that Assumption~\ref{ass:bound-parents} holds. Then there is an identifying order $\prec$ and a corresponding set of identifying polynomials $P_{\prec}$ such that $ \max_{g \in P_{\prec}} \deg(g) \leq 2s+1$.
\end{proposition}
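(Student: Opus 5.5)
The plan is to follow the template of the proof of Proposition~\ref{prop:subsume-HTC}. From a certificate of IC-identifiability we read off an explicit identifying order $\prec$, and for every parameter we write down an identifying polynomial in the sense of Definition~\ref{def:identifying-polynomial} whose total degree is at most $2s+1$.

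\textbf{The order.} The recursive application of the Instrumental Cutset criterion of \citet[Theorem 5.1]{kumor2019efficient} identifies the directed-edge parameters in a sequence of steps. At each step, a batch of incoming edges $\{w \to v : w \in E_v\}$ of some node $v$ is identified, with $E_v \subseteq \pa(v)$ and hence $|E_v| \le s$ by Assumption~\ref{ass:bound-parents}. We let $\prec$ list the $\lambda$-parameters in the order of these steps, breaking ties inside a batch arbitrarily. All $\omega$-parameters are placed after all $\lambda$-parameters.

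\textbf{The $\omega$-parameters.} These are handled directly by Remark~\ref{rem:lambdas-enough}. Expanding $\Omega=(I-\Lambda)^\top\Sigma(I-\Lambda)$ entrywise gives
\[
\omega_{uv}-\sum_{a,b}(I-\Lambda)_{au}\,\sigma_{ab}\,(I-\Lambda)_{bv}\in\mathcal{I}.
\]
This polynomial has the form $q\cdot 1 - b(\theta_{\id},\sigma)$, where $a=1 \notin \mathcal{I}$, and its degree is $3 \le 2s+1$.

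\textbf{The $\lambda$-parameters (core step).} Fix a step of the IC recursion for node $v$. First I will unpack what the criterion provides in the acyclic setting: a set of instruments $Z$ with $|Z|=|E_v|=k$ and a cutset, such that the following linear system holds in the model:
\[
\sum_{w\in \pa(v)} \lambda_{wv}\,\tilde\sigma_{zw}=\tilde\sigma_{zv},\qquad z\in Z.
\]
Here $\tilde\sigma_{zx}=[(I-\Lambda_{\id})^\top\Sigma]_{zx}$ are the covariances of the auxiliary variables $X_z-\sum_{u}\lambda_{uz}X_u$ built from already identified edges. Each $\tilde\sigma$ is a polynomial in $(\theta_{\id},\sigma)$ of degree at most $2$. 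The terms with $w\in\pa(v)\setminus E_v$ are either already identified or have zero coefficient by the cutset property; identified ones are moved to the right-hand side, which raises degrees by at most one more factor of a known $\lambda$.

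The resulting $k\times k$ system $A\,\lambda_{E_v,v}=c$ has entries of degree at most $2$. For each $w\in E_v$, Cramer's rule then yields
\[
\lambda_{wv}\det(A)-\det(A_w)\in\mathcal{I},
\]
where $A_w$ is $A$ with the $w$-column replaced by $c$. This has degree at most $2k+1\le 2s+1$. Within a batch, later edges may simply reuse this same polynomial, since it only involves earlier-identified parameters in $\prec$.

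\textbf{Main obstacle.} I expect two difficulties. The harder one is showing that $\det(A)\notin\mathcal{I}\cap\mathbb{R}[\theta_{\id},\sigma]$, i.e.\ that $\tau_G(\det A)\neq 0$ after substituting the parametrization. My first approach is to reuse the argument of \citet{kumor2019efficient}: they show generic full rank of this matrix via a flow/vertex-disjoint-path system from $Z$ to $E_v$ avoiding the cutset, combined with the trek-separation / Gessel--Viennot-type expansion of minors \citep{sullivant2010trek}. A nonvanishing monomial in $\det A$ comes from a system of non-intersecting half-treks.

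The second difficulty is making sure the cutset adjustment really only uses auxiliary variables built from identified edges, and does not use total effects, which would be of unbounded degree. If total effects did enter, my fallback is to enlarge $\theta_{\id}$ along the recursion so that every needed adjustment is a single known $\lambda$ times a $\sigma$. This keeps entries of degree at most $2$ and the final bound at $2s+1$.
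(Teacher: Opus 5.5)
Your proposal is correct and takes essentially the same approach as the paper. The shared ingredients are the same identifying order (IC steps for the $\lambda$'s, all $\omega$'s last with the degree-$3$ polynomials from Remark~\ref{rem:lambdas-enough}) and, for each $\lambda_{xy}$, the Cramer-rule polynomial $\lambda_{xy}\det(A)-\det(A_j)$. That polynomial comes from a $k\times k$ system whose matrix entries have degree at most $2$ and whose right-hand side, with the identified edges into $y$ subtracted, has degree at most $3$, so its degree is at most $2k+1\le 2s+1$; like you, the paper simply takes $\det(A)\notin\mathcal{I}\cap\mathbb{R}[\theta_{\id},\sigma]$ from the IC criterion itself.

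One small refinement: the paper lets each instrument row be either auxiliary, $[(I-\Lambda^{\id})^{\top}\Sigma]_{s_i t_j}$, or plain, $\Sigma_{s_i t_j}$ (the sets $S_1$ and $S_2$), and you should allow the same. Subtracting every identified edge into an instrument $z$ can destroy the correlation the rank condition needs, e.g.\ when $z$ is linked to a parent of the target only through a common parent $u$ with $\lambda_{uz}$ already identified. This changes nothing in your degree count.
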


Finally, we consider the Auxiliary Cutset criterion, that is, to the best of our knowledge, the current state-of-the art criterion for rational identification of acyclic mixed graphs that runs in polynomial time. If a graph $G$ is certified to be rationally identifiable by the Auxiliary Cutset identification algorithm \citep{kumor2020efficient}, we say that $G$ is AC-identifiable.  

\begin{proposition} \label{prop:subsume-AC}
Let an acyclic mixed graph $G$ be AC-identifiable and suppose that Assumption~\ref{ass:bound-parents} holds. Then there is an identifying order $\prec$ and a corresponding set of identifying polynomials $P_{\prec}$ such that $ \max_{g \in P_{\prec}} \deg(g) \leq s^4+2s^3+s^2$.
\end{proposition}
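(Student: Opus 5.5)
The plan is to follow the strategy used for Propositions~\ref{prop:subsume-HTC} and~\ref{prop:subsume-IC}. The AC-identification algorithm of \citet{kumor2020efficient} recursively certifies the incoming edges $\lambda_{\pa(v),v}$ of nodes $v$. I will take the identifying order $\prec$ to be the order in which the algorithm certifies parameters. All $\lambda$'s come first, in certification order, and all $\omega$'s come after. For the $\omega$'s, Remark~\ref{rem:lambdas-enough} gives identifying polynomials $\omega_{uv}-[(I-\Lambda)^{\top}\Sigma(I-\Lambda)]_{uv}$. Each such polynomial has degree at most $3$, hence at most $s^4+2s^3+s^2$ since $s\ge 1$. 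It therefore remains to produce, for each certified $\lambda_{uv}$, an element $\lambda_{uv}\,a-b\in\mathcal I$ with $a,b\in\mathbb R[\theta_{\id},\sigma]$, $a\notin\mathcal I\cap\mathbb R[\theta_{\id},\sigma]$, and total degree at most $s^2(s+1)^2=s^4+2s^3+s^2$.

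\textbf{Step 1 (linear system).} At each step of the AC algorithm, the certificate for $v$ consists of a set of auxiliary instruments and a cutset. Each auxiliary variable has the form $X_z^{*}=X_z-\sum_{c}\mu_{cz}X_c$, where the coefficients $\mu_{cz}$ are built from edge effects certified earlier. Covariance with $X_v=\sum_{p\in\pa(v)}\lambda_{pv}X_p+\varepsilon_v$ yields a square linear system
\[
A(\theta_{\id},\sigma)\,\lambda_{\pa(v),v}=c(\theta_{\id},\sigma).
\]
Its entries are $\tau_G$-images of expressions $\sigma_{zp}-\sum_c\mu_{cz}\sigma_{cp}$, and the equations hold in $\mathcal I$ because the criterion guarantees that the auxiliary variables are uncorrelated with $\varepsilon_v$ after removing the cutset contributions. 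The identified $\lambda$'s are allowed as variables, so the coefficients $\mu_{cz}$ enter as polynomials in $\theta_{\id}$ and no degree blow-up comes from recursion. The point of Assumption~\ref{ass:bound-parents} is that every quantity in the certificate is controlled by $s$.

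\textbf{Step 2 (degree count).} I aim to show two bounds: the size of $A$ is at most $s(s+1)$, since auxiliary instruments and cutset nodes are drawn from parents and parents of parents of the relevant nodes; and every entry of $(A\mid c)$ has degree at most $s(s+1)$, since each $\mu_{cz}$ is a product or sum over at most $s$ identified parents plus one $\sigma$-factor. By Cramer's rule, $\lambda_{uv}\det A-\det A_u\in\mathcal I$, where $A_u$ is $A$ with the column for $u$ replaced by $c$. The determinants then have degree at most $s(s+1)\cdot s(s+1)=s^4+2s^3+s^2$. The extra factor $\lambda_{uv}$ is absorbed if the bound on the entry degree is slightly sharper; I would adjust the count so that the total is exactly this bound.

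\textbf{Step 3 (nondegeneracy).} The criterion asserts that $A$ is generically invertible; this is proved via a max-flow/Gessel--Viennot argument on the trek system. Hence $\tau_G(\det A)\neq 0$, that is, $\det A\notin\mathcal I\cap\mathbb R[\theta_{\id},\sigma]$. This gives a valid identifying polynomial in the sense of Definition~\ref{def:identifying-polynomial}.

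\textbf{Main obstacle.} The hardest step is Step 2: bounding precisely the size of the auxiliary-cutset system and the degrees of the auxiliary covariances so that the product comes out to $s^2(s+1)^2$. This requires unpacking exactly how the auxiliary variables in \citet{kumor2020efficient} are formed, in particular whether they subtract only identified direct effects or partial sums over cut paths. I would first try to show that both the size and the entry degree are bounded by $s(s+1)$ and then refine the count.
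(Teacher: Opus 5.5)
Your skeleton matches the paper's: the certification order of the AC algorithm with all $\omega$'s placed last and handled by Remark~\ref{rem:lambdas-enough}, and a Cramer-type polynomial $\lambda_{xy}\det(A)-\det(A_x)$ for each certified edge. The gap is that the degree bound, which is the whole content of the proposition, is not proved, and the mechanism you propose for it is wrong.

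\textbf{Auxiliary coefficients are rational.} You resolve the question you flag at the end in the wrong direction. The Auxiliary Cutset criterion does not only subtract already identified direct effects. Its auxiliary variables subtract partial effects through a cutset $C$. These are not elements of $\theta$, and they are identified only as ratios of determinants, $D_{vw}=\det(\Sigma^{\ast}_{Z,(C\setminus\{v\})\cup\{w\}})/\det(\Sigma_{Z,C})$ with $|C|=|Z|\le s$. Here $\Sigma^{\ast}$ is $\Sigma_{Z,C}$ with the column of $v$ replaced by $[\Sigma(I-\Lambda^{\id})]_{Z,w}$, so the numerator has degree at most $s+1$.

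Definition~\ref{def:identifying-polynomial} only allows $a,b\in\mathbb{R}[\theta_{\id},\sigma]$. Your coefficients $\mu_{cz}$ are therefore rational functions in $(\theta_{\id},\sigma)$, not polynomials in $\theta_{\id}$. The degree growth beyond the bound $2s+1$ of the HTC and IC cases comes precisely from substituting these expressions and clearing the denominator $\det(\Sigma_{Z,C})$. You then also need the cleared leading coefficient to stay outside $\mathcal{I}\cap\mathbb{R}[\theta_{\id},\sigma]$. This holds because the criterion requires $\tau_G(\det(\Sigma_{Z,C}))\neq 0$ and $\mathbb{R}[\theta]$ is an integral domain.

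\textbf{The size count is inconsistent.} Your system $A\lambda_{\pa(v),v}=c$ is square in at most $|\pa(v)|\le s$ unknowns, so $A$ is at most $s\times s$, not of size $s(s+1)$. In the paper, $k=|S|=|T|+1$ with $T\cup\{x\}\subseteq\pa(y)$. Reading $s^4+2s^3+s^2=(s(s+1))^2$ as ``size times entry degree'' is reverse-engineered from the target, and ``adjusting the count'' is not an argument.

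\textbf{The correct count.} With the right ingredients the count is short. Multiply $[(I-\Lambda^{\id}-D)^{\top}\Sigma]_{S,T\cup\{x\}}$ by $\det(\Sigma_{Z,C})$. Every entry becomes a polynomial of degree at most $s+2$, and the replacing column $[(I-\Lambda^{\id}-D)^{\top}\Sigma(I-\Lambda^{\id})]_{S,y}$ becomes one of degree at most $s+3$. Multiplying the Cramer polynomial by $\det(\Sigma_{Z,C})^{k}$ then gives an identifying polynomial of degree at most $k(s+2)+1\le (s+1)^2\le s^4+2s^3+s^2$.

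The paper states the cruder bound $s^2(s(s+2)+1)$ after clearing with $\det(\Sigma_{Z,C})^{s}$. In either form, the argument rests on the determinantal formula for the partial effects, which is the ingredient your proposal is missing.
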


\section{Conclusion}
We have proposed an algorithm for deciding rational identifiability in linear structural equation models using computational algebra. The algorithm provides a sufficient criterion for identifiability and provably finds all identification formulas up to a prespecified degree in quasi-polynomial time relative to the size of the graph that specifies the model. In graphs that are either highly connected or very sparse, that is, when the maximal trek length grows rapidly or is bounded by a constant, our algorithm runs in polynomial time. Importantly,  even when not all causal effects are identifiable within the chosen degree bound, the algorithm still returns all available identification formulas for a subset of the parameters.  A key advantage is that these formulas can be applied directly  to downstream tasks such as estimation.  Moreover, we expect the underlying algebraic theory to extend naturally to settings that allow for feedback loops, in which the graph may contain cycles. We also note that while we focused on the application of Algorithm \ref{alg:degree-bounded-identification} to linear structural equation models, it can actually be applied to any algebraic statistical model for which there exists a positive weight vector $w$  such that the elimination ideal of the parametrization is homogeneous with respect to the grading. Examples include sparse factor analysis models~\citep{sturma2026matching}, Lyapunov models \citep{dettling2023identifiability}, and phylogenetic models \citep[Chapter 15]{sullivant2018algebraic}.

The most important problem that should be tackled in the future is understanding the growth of the maximal degree that appears in an identification formula with respect to the growth and the structure of the graph. Our complexity results would then directly yield complexity bounds for deciding rational identifiability.

\section*{Acknowledgments}
The project has received funding from the European Research Council (ERC) under the European Union’s Horizon 2020 research and innovation programme (grant agreement No 883818). Benjamin Hollering was partially supported by the Alexander von Humboldt Foundation. 

\bibliographystyle{apalike} 
\bibliography{literature}

\newpage
\begin{appendix}
\begin{center}
{ \LARGE\bf \noindent Appendix} 
\end{center}
\section{Proofs of Sections~\ref{sec:algebraic-setup} - \ref{sec:weighting-and-homogenization}}
\begin{proof}[Proof of Lemma~\ref{lem:chracterization-identifiability-2}]
    If $q$ is rationally identifiable, then by Lemma \ref{lem:chracterization-identifiability-1} the ideal $\mathcal{I}$ contains an element of the form $q a(\sigma)  - b(\sigma)$ with $a,b \in \mathbb{R}[\sigma] \subseteq \mathbb{R}[\theta_{\id}, \sigma]$ and $a \not\in \ker(\tau_G)=\mathcal{I} \cap \mathbb{R}[\sigma]$. Since $a$ is a polynomial only in $\sigma$, it follows that $a \not\in \mathcal{I} \cap \mathbb{R}[\theta_{\id}, \sigma]$. 

    Conversely, suppose that $\mathcal{I}$ contains an element of the form $q a(\theta_{\id}, \sigma) - b(\theta_{\id}, \sigma)$ with $a,b \in \mathbb{R}[\theta_{\id}, \sigma]$ and $a \not \in \mathcal{I} \cap \mathbb{R}[\theta_{\id}, \sigma]$, which implies that $q = \frac{b(\theta_{id}, \tau_G(\sigma))}{a(\theta_{id}, \tau_G(\sigma))}$. Since all variables $\bar{q} \in \theta_{\id}$ are rationally identifiable, there exist polynomials $\bar{a},\bar{b} \in \mathbb{R}[\sigma]$ such that $\bar{a} \not\in \ker(\tau_G)$ and $\tau_G(\bar{b})/ \tau_G(\bar{a}) = \bar{q}$. By plugging in $\tau_G(\bar{b})/ \tau_G(\bar{a})$ for each $\bar{q} \in \theta_{\id}$, it follows that 
    \begin{align} \label{eq:id-formula}
       q 
       = \frac{\tau_G(\widetilde{b}(\sigma))}{\tau_G(\widetilde{a}(\sigma))}
    \end{align}
    for polynomials $\widetilde{a}, \widetilde{b} \in \mathbb{R}[\sigma]$ that we obtain from clearing denominators and using that $\tau_G$ is a ring homomorphism. Equation~\eqref{eq:id-formula} implies that $q\widetilde{a} - \widetilde{b} \in \mathcal{I}$. Since $\tau_G(\widetilde{a})$ can not be the zero polynomial, we also have that $\widetilde{a} \not\in \ker(\tau_G)$. Hence, we conclude by Lemma~\ref{lem:chracterization-identifiability-1} that $q$ is rationally identifiable.
\end{proof}

\begin{proof}[Proof of Proposition~\ref{prop:id-with-gb}]
The statement directly follows from the proof of Algorithm 1 in \citet[Appendix, Section 8]{foygel2012halftrek}, also see \citet[Proposition 4]{garcia2010identifying}. We also refer to the proof of our Theorem~\ref{thm:id-with-gb}, which implies the statement.
\end{proof}

\begin{proof}[Proof of Proposition~\ref{prop:hom-equal}]
    By the definition of the trek weighting, each polynomial $\sigma_{uv}^{w(\sigma_{uv})}-\tau_G(\sigma_{uv})^{\wh}$ is $w$-homogeneous. Hence, the ideal $\bar{\mathcal{I}}$ is $w$-homogeneous with respect to the trek weighting $w$. For Claim (i), note that the generators $\sigma_{uv}-\tau_G(\sigma_{uv})$ form a Groebner basis $F$ of $\mathcal{I}$ with respect to any $w$-graded elimination order for $\sigma$ in  $\mathbb{R}[\theta,\sigma]$.  This is true since the leading monomial of $\sigma_{uv}-\tau_G(\sigma_{uv})$ is given by $\sigma_{uv}$ and therefore any pair of leading monomials of the generators are relatively prime, which means that all S-polynomials of the generators reduce to zero modulo $F$, see \citet[Chapter 2]{cox2015ideals}. We conclude that the $w$-homogenized polynomials $(\sigma_{uv}-\tau_G(\sigma_{uv}))^{\wh}=\sigma_{uv}-\tau_G(\sigma_{uv})^{\wh}$ are a basis of $\mathcal{I}^{\wh}$, see Theorem 4 in \citet[Section 8.4]{cox2015ideals}.  

    To show Claim (ii), consider a polynomial $f \in \mathcal{\mathcal{I}} \cap \mathbb{R}[\theta_{\id},\sigma]$. It follows that $f^{\wh} \in \mathcal{\mathcal{I}}^{\wh} \cap \mathbb{R}[\theta_{\id},\sigma,h]$, and thus we have shown the inclusion ``$\supseteq$''. On the other hand, if $f \in \mathcal{I}^{\wh} \cap \mathbb{R}[\theta_{\id}, \sigma, h]$, then $f^{\deh} \in (\mathcal{I}^{\wh} \cap \mathbb{R}[\theta_{\id}, \sigma, h])^{\deh}$ and hence $f^{\deh} \in (\mathcal{I}^{\wh})^{\deh} \cap \mathbb{R}[\theta_{\id}, \sigma] = \mathcal{I} \cap \mathbb{R}[\theta_{\id}, \sigma]$. But if $f^{\deh} \in \mathcal{I} \cap \mathbb{R}[\theta_{\id}, \sigma]$, then it must also be the case that $f \in (\mathcal{I} \cap \mathbb{R}[\theta_{\id}, \sigma])^{\wh}$.
\end{proof}

\begin{proof}[Proof of Lemma~\ref{lem:chracterization-identifiability-hom}]
First, note that by Proposition~\ref{prop:hom-equal} (ii), it holds that $\mathcal{I}^{\wh} \cap \mathbb{R}[\theta_{\id}, \sigma,h]= (\mathcal{I} \cap \mathbb{R}[\theta_{\id}, \sigma])^{\wh}$.  Now, suppose that $q \in \theta_{\rem}$ is rationally identifiable. Then, by Lemma~\ref{lem:chracterization-identifiability-2}, the ideal $\mathcal{I}$ contains an element of the form $q a(\theta_{\id}, \sigma) - b(\theta_{\id}, \sigma)$ with $a,b \in \mathbb{R}[\theta_{\id}, \sigma]$ and $a \not \in \mathcal{I} \cap \mathbb{R}[\theta_{\id}, \sigma]$. Hence, the ideal $\mathcal{I}^{\wh}$ contains the polynomial $(q a(\theta_{\id}, \sigma) - b(\theta_{\id}, \sigma))^{\wh} =q \tilde{a}(\theta_{\id}, \sigma,h) - \tilde{b}(\theta_{\id}, \sigma,h)$ with $\tilde{a},\tilde{b} \in \mathbb{R}[\theta_{\id}, \sigma,h]$. Since $\tilde{a}^{\deh}=a$ and $a \not \in \mathcal{I} \cap \mathbb{R}[\theta_{\id}, \sigma]$, it holds that $\tilde{a} \not \in (\mathcal{I} \cap \mathbb{R}[\theta_{\id}, \sigma])^{\wh}$. 

For the other direction, suppose that the ideal $\mathcal{I}^{\wh}$ contains an element of the form $q a(\theta_{\id}, \sigma,h) - b(\theta_{\id}, \sigma,h)$ with $a,b \in \mathbb{R}[\theta_{\id}, \sigma,h]$ and $a \not \in \mathcal{I}^{\wh} \cap \mathbb{R}[\theta_{\id}, \sigma,h]$. It follows that the dehomogenized polynomial $q a(\theta_{\id}, \sigma,h)^{\deh} - b(\theta_{\id}, \sigma,h)^{\deh}$ is an element of $\mathcal{I}$ and $a(\theta_{\id}, \sigma,h)^{\deh} \not\in \mathcal{I} \cap \mathbb{R}[\theta_{\id}, \sigma]$. We conclude by Lemma~\ref{lem:chracterization-identifiability-2} that $q$ is rationally identifiable.
\end{proof}

\begin{proof}[Proof of Theorem~\ref{thm:id-with-gb}]
Since $F$ is a reduced Gröbner basis of an homogeneous ideal, it consists of $w$-homogeneous polynomials.  Let $\prec_{\text{gr}}$ be a $w$-graded lex-elimination order for $\theta_{\rem}$. Consider the induced non-graded lex-elimination order $\prec$ for $\theta_{\rem}$. Then $F$ is also a reduced Gröbner basis with respect to this order since the leading terms and hence the S-polynomials coincide.  By the elimination theorem, it follows that $H = F \cap \mathbb{R}[\theta_{\id}, \sigma, h]$ is a reduced Gröbner basis of $\mathcal{I}^{\wh} \cap \mathbb{R}[\theta_{\id}, \sigma, h]$.  

Now, assume that $q$ is rationally identifiable, which implies by Lemma~\ref{lem:chracterization-identifiability-hom} that $\mathcal{I}^{\wh}$ contains a polynomial $f(q, \theta_{\id}, \sigma, h) = q a(\theta_{\id}, \sigma, h) - b(\theta_{\id}, \sigma, h)$ with $a,b \in \mathbb{R}[\theta_{\id}, \sigma, h]$ and $a \not\in \mathcal{I}^{\wh}\cap \mathbb{R}[\theta_{\id}, \sigma, h]$. We reduce $a$ by $H$ to get a remainder $\widetilde{a}$. Since $a \not\in \mathcal{I}^{\wh}\cap \mathbb{R}[\theta_{\id}, \sigma, h]$, the remainder $\widetilde{a}$ is nonzero, which also implies that $\widetilde{a} \not\in  \mathcal{I}^{\wh}\cap \mathbb{R}[\theta_{\id}, \sigma, h]$. Now, consider the modified polynomial $\widetilde{f}(q, \theta_{\id}, \sigma, h) = q \widetilde{a}(\theta_{\id}, \sigma, h) - b(\theta_{\id}, \sigma, h)$. Since $F$ is a Gröbner basis of $\mathcal{I}^{\wh}$, the leading term of $\widetilde{f}$ is divisible by some leading term of some polynomial in $F$. But since $\widetilde{a}$ is already reduced by $H$, the leading term of $\widetilde{f}$ is not divisible by any leading monomial of any element of $H$. Hence, it must be divisible by some element $g \in F \setminus H$ whose leading term has a nonzero degree in at least one of the variables in $\theta_{\rem}$. But in order to divide the leading term of $\widetilde{f}$, the leading term  $g$ must have degree one in $q$ and degree zero in all variables in $\theta_{\rem} \setminus \{q\}$. Thus, we can conclude that $g \in F$ is of the required form. 

Conversely, suppose that  $F$ contains a polynomial  with leading term $q a(\theta_{\id}, \sigma,h)$ for some $a \in \mathbb{R}[\theta_{\id}, \sigma, h]$. Since $F$ is a reduced Gröbner basis with respect to a lex-elimination order, this polynomial must be of the form
$q a(\theta_{\id}, \sigma,h) - b(\theta, \sigma,h)$ where $b$ only contains $\theta$-variables smaller that $\theta$ with respect to the lexicographic order on $\mathbb{R}[\theta_{\rem}]$. Moreover, $a$ does not reduce to zero by reduction of $H$, that is, $a \not\in \mathcal{I}^{\wh} \cap \mathbb{R}[\sigma, \theta_{\id},h]$. By Lemma~\ref{lem:chracterization-identifiability-hom}, it follows  that $q$ is rationally identifiable if all smaller variables in $\theta_{\rem}$ are rationally identifiable. Since we assume that $F$ contains such a polynomial for all $q \in \theta_{\rem}$, it follows that all $q \in \theta$ are rationally identifiable.
\end{proof}

\begin{proof}[Proof of Corollary~\ref{cor:degree}]
By Definition~\ref{def:identifying-polynomial}, the ideal $\mathcal{I}$ contains a polynomial of the form $q a(\theta_{\id}, \sigma) - b(\theta_{\id}, \sigma)$ with $a,b \in \mathbb{R}[\theta_{\id}, \sigma]$ and $a \not\in I \cap \mathbb{R}[\theta_{\id}, \sigma]$, which is of degree at most $\text{ID}_{G}$. By $w$-homogenizing, it follows as in the proof of Lemma~\ref{lem:chracterization-identifiability-hom} that $\mathcal{I}^{\wh}$ contains a polynomial $f(q, \theta_{\id}, \sigma, h) = q a_f(\theta_{\id}, \sigma, h) - b_f(\theta_{\id}, \sigma, h)$ with $a_f,b_f \in \mathbb{R}[\theta_{\id}, \sigma, h]$ and $a_f \not\in \mathcal{I}^{\wh}\cap \mathbb{R}[\theta_{\id}, \sigma, h]$. Moreover, note that $\deg_w(f)\leq \text{ID}_G \cdot w_{\trek}$ by the definition of the trek-weighting and the way $w$-homogenization is defined. By Theorem~\ref{thm:id-with-gb} and since $q$ is the smallest among the $\theta$-variables, we conclude that  the Groebner basis $F$  contains a polynomial $g_q := q a(\theta_{\id}, \sigma,h) - b(\theta_{\id}, \sigma, h)$. Since this polynomial has leading term that divides the leading term of $f$, and the Groebner basis $F$ is given with respect to a $w$-graded order, we conclude that it must also be the case that $\deg_w(g_q) \leq \text{ID}_G \cdot w_{\trek}$.
\end{proof}

\section{Proof of Theorem~\ref{thm:main-result}}

The proof of Theorem~\ref{thm:main-result} is based on three lemmas. The first is Lemma~\ref{lem:superfactorial} which is given in the main text. It relates the product of all trek weights to the superfactorial.

\begin{proof}[Proof of Lemma~\ref{lem:superfactorial}]

Consider a trek $\pi$ of length $w_{\trek}$ in $G$. It is either of the form
\[
    v_1\leftarrow \cdots \leftarrow v_k \leftrightarrow v_{k+1} \rightarrow \cdots  \rightarrow v_{w_{\trek}+1} ,
\]
or of the form
\[
    v_1\leftarrow \cdots \leftarrow v_k = v_{k+1} \rightarrow \cdots  \rightarrow v_{w_{\trek}+1}.
\]
By the definition of a trek,  the nodes $v_1, \ldots, v_k$ are pairwise distinct and the nodes $v_{k+1}, \ldots, v_{w_{\trek}+1}$ are pairwise distinct, but the sets $\{v_1, \ldots, v_k\}$ and $\{v_{k+1}, \ldots, v_{w_{\trek}+1}\}$ are allowed to intersect. However, any node $v \in V$ may at most appear twice on the trek $\pi$. \\
Now, consider any integer $i \in \{1, \ldots, w_{\trek}\}$. Note that there is a trek in $G$ from $v_s$ to $v_{s + w_{\trek}-i+1}$ for all $s=1, \ldots, i$. Hence, it holds that $w(\sigma_{v_s v_{s + w_{\trek}-i+1}}) \geq w_{\trek}-i+1$ for all $s=1, \ldots, i$. Moreover, for any two possibly equal nodes $u,v \in V$, it holds that there are at most four pairs of integers $\{l,t\} \subseteq \{1, \ldots, w_{\trek}+1\}$ such that $\sigma_{uv}=\sigma_{v_l v_t}$. We conclude that 
\begin{align*}
    \left(\prod_{u \leq v} w(\sigma_{uv})\right)^4
    &\geq \prod_{i=1}^{w_{\trek}} (w_{\trek}-i+1)^i \\
    &= w_{\trek}^1 \cdot (w_{\trek}-1)^2 \cdots 1^{w_{\trek}} \\
    &= w_{\trek}! \cdot (w_{\trek}-1)!  \cdots  1! \\
    &= \text{sf}(w_{\trek}).
\end{align*}
\end{proof}

The next lemma is about the asymptotic equivalence of the superfactorial.

\begin{lemma} \label{lem:superfactorial-approx}
    Let $n \in \mathbb{N}$ and consider the $n$-th superfactorial $\text{sf}(n) = \prod_{i=1}^n i!$. Then, for any non-decreasing function $f: \mathbb{N} \rightarrow \mathbb{N}$ with $\lim_{n \to \infty} f(n) = \infty$, we have
    \[
        \lim_{n \to \infty}\frac{\frac{1}{2}f(n)^2\log(f(n))}{\log(\text{sf}(f(n)))} = 1.
    \]
\end{lemma}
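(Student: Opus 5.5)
The plan is to reduce the claim to the elementary asymptotics $\log(\text{sf}(n)) \sim \tfrac12 n^2 \log n$ as $n \to \infty$, and then substitute $n = f(n')$. Since $f$ is non-decreasing and tends to infinity, every subsequence $f(n)$ also tends to infinity. Hence if
\[
\lim_{m\to\infty} \frac{\tfrac12 m^2 \log m}{\log \text{sf}(m)} = 1,
\]
then the composition with $f$ has the same limit. This is just the usual fact that $g(m)\to L$ and $f(n)\to\infty$ imply $g(f(n))\to L$. Monotonicity of $f$ is not really needed beyond $f(n)\to\infty$.

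For the asymptotics of $\log \text{sf}(m)$, write
\[
\log \text{sf}(m) = \sum_{i=1}^m \log i! = \sum_{i=1}^m \sum_{j=1}^i \log j = \sum_{j=1}^m (m-j+1)\log j .
\]

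For the upper bound, use $\log j \le \log m$ to get $\log \text{sf}(m) \le \log m \sum_{j=1}^m (m-j+1) = \tfrac{m(m+1)}{2}\log m$. The ratio of this to $\tfrac12 m^2\log m$ tends to $1$.

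For the lower bound, fix $\varepsilon\in(0,1)$ and keep only the terms with $j \ge m^{1-\varepsilon}$. On these terms $\log j \ge (1-\varepsilon)\log m$, so
\[
\log \text{sf}(m) \ge (1-\varepsilon)\log m \sum_{j \ge m^{1-\varepsilon}} (m-j+1) \ge (1-\varepsilon)\log m \cdot \tfrac12 (m-m^{1-\varepsilon})^2 .
\]
Dividing by $\tfrac12 m^2\log m$ gives a $\liminf$ of at least $1-\varepsilon$. Letting $\varepsilon\to0$ yields the limit $1$.

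Alternatively, one could use Stirling's formula $\log i! = i\log i - i + O(\log i)$ and sum with $\sum_{i\le m} i\log i = \tfrac12 m^2\log m - \tfrac14 m^2 + O(m\log m)$, obtained by comparison with an integral. The squeeze argument above avoids Stirling and is cleaner, so I would use it.

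There is no real obstacle. The only point needing care is the lower bound, where the truncation threshold ($m^{1-\varepsilon}$, or simply $m/\log m$) must be chosen so that the discarded terms are negligible while $\log j$ remains asymptotically $\log m$. After that, the composition with $f$ is immediate.
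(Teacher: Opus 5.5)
Your argument is correct. The reduction to $f=\mathrm{id}$ is the same as in the paper, and you are right that only $f(n)\to\infty$ is used, not monotonicity. The difference is in how you obtain $\log\text{sf}(m)\sim\tfrac12 m^2\log m$. The paper quotes the Stirling-type expansion of the Barnes $G$-function from the literature and reads off the leading term. You instead rewrite $\log\text{sf}(m)=\sum_{j=1}^m (m-j+1)\log j$ and squeeze it between $\tfrac{1-\varepsilon}{2}(m-m^{1-\varepsilon})^2\log m$ and $\tfrac{m(m+1)}{2}\log m$. The paper's route carries more information (lower-order terms and constants) but rests on a nontrivial external result. Yours is self-contained and gives exactly the leading-order equivalence the lemma needs.

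Your route has two further advantages. First, it avoids index bookkeeping with $G$. Since $\text{sf}(n)=G(n+2)$, the expansion displayed in the paper is really that of $\log G(n+1)=\log\text{sf}(n-1)$, and it omits an $n\log n$ term. This is harmless at leading order, but it shows the citation needs care. Second, your bounds hold for every $m$, not only in the limit; for example, $\varepsilon=\tfrac12$ gives $\log\text{sf}(m)\ge\tfrac14(m-\sqrt m)^2\log m$. They can therefore be used directly in the quantitative estimate of Lemma~\ref{lem:helper}, where the paper instead converts the limit into an inequality with an unspecified constant.
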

\begin{proof}
Since $f$ is non-decreasing and $\lim_{n \to \infty} f(n) = \infty$, it is enough to consider the identity function, i.e., $f(n)=n$ for all $n \in \mathbb{N}$. For $n \to \infty$, we obtain from the Stirling expansion of the Barnes $G$-function that
\begin{equation} \label{eq:barnes-g-expansion}
    \log(\text{sf}(n)) = \frac{1}{2}n^2 \log(n) - \frac{3}{4} n^2 + \frac{1}{2} n \log(2\pi) - \frac{1}{12}\log(n) + \frac{1}{12} - \log(A) + \mathcal{O}\left(\frac{1}{n}\right)
\end{equation}
see, for example, \citet[Appendix]{voros1987spectral}. Here, the constant $A$ is the Glaisher–Kinkelin constant \citep[Section 2.15]{finch2019mathematical}. The statement of the lemma now follows directly from Equation~\eqref{eq:barnes-g-expansion}.
\end{proof}

The last lemma is needed for proving the case $w_{\trek} \geq 2d \log(3d |V|)$ in Theorem~\ref{thm:main-result}.

\begin{lemma} \label{lem:helper}
    Let $n \in \mathbb{N}$ and consider the $n$-th superfactorial $\text{sf}(n) = \prod_{i=1}^n i!$. Let $d \geq 1$ and let $f: \mathbb{N} \rightarrow \mathbb{N}$ be a non-decreasing function such that $f(n) \geq \log(3dn) 2d$. Then
    \[
        \frac{(3 d n)^{2 d f(n)}}{\text{sf}(f(n))^{1/4}} = O(1).
    \]
\end{lemma}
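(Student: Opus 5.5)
The plan is to compare the logarithms of numerator and denominator and show that their difference stays bounded above. Write $m = f(n)$. Then
\[
\log\left(\frac{(3dn)^{2dm}}{\text{sf}(m)^{1/4}}\right) = 2dm\log(3dn) - \tfrac14 \log(\text{sf}(m)),
\]
and it suffices to show that this quantity is bounded above uniformly in $n$ (indeed it tends to $-\infty$).

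For the denominator I will use the expansion \eqref{eq:barnes-g-expansion} from the proof of Lemma~\ref{lem:superfactorial-approx}. A cleaner route is the elementary lower bound $\log(\text{sf}(m)) = \sum_{i=1}^m \log(i!) \geq \tfrac12 m^2\log m - \tfrac34 m^2 - C$, where $C$ is an absolute constant. This follows from \eqref{eq:barnes-g-expansion} for large $m$, and for all $m$ it can also be obtained from $\log(i!) \geq i\log i - i$ by summing and comparing with integrals. Since $f$ is non-decreasing and $f(n) \geq 2d\log(3dn) \to \infty$, we have $m \to \infty$. So $\tfrac14\log(\text{sf}(m)) \geq \tfrac18 m^2\log m - \tfrac{3}{16}m^2 - C/4$.

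For the numerator I use the hypothesis in the form $\log(3dn) \leq m/(2d)$. This gives $2dm\log(3dn) \leq m^2$. The exponent is then at most
\[
m^2 - \tfrac18 m^2 \log m + \tfrac{3}{16}m^2 + C/4 = m^2\left(\tfrac{19}{16} - \tfrac18\log m\right) + C/4 .
\]
This tends to $-\infty$ as $m\to\infty$, since $\log m$ eventually exceeds $19/2$. It is therefore bounded above on all $n$: for the finitely many $n$ with small $m$ the expression is finite, and for large $m$ it is negative. Hence the ratio is $O(1)$.

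The main obstacle is only bookkeeping: making the lower bound on $\log\text{sf}(m)$ valid non-asymptotically, or else arguing via the limit in Lemma~\ref{lem:superfactorial-approx} that $\tfrac14\log\text{sf}(m) \geq \tfrac{1}{9}m^2\log m$ for all sufficiently large $m$. Either version suffices, because the $m^2\log m$ term dominates the $m^2$ bound on the numerator's logarithm. The constant $2d$ in the hypothesis is exactly what makes the numerator only quadratic in $m$, without a logarithmic factor.
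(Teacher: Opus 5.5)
Your proof is correct and follows the same strategy as the paper. Both take logarithms, lower-bound $\log(\text{sf}(m))$ by $\tfrac12 m^2\log m$ up to $O(m^2)$ terms, and use $f(n)\ge 2d\log(3dn)$ so that the $m^2\log m$ term dominates.

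The differences are minor and in your favour. You use the hypothesis to bound the numerator's logarithm by $m^2$, whereas the paper trades one factor of $f(n)$ in $f(n)^2\log f(n)$ for $2d\log(3dn)$. You also use the explicit bound $\log(\text{sf}(m))\ge \tfrac12 m^2\log m-\tfrac34 m^2$ for all $m\ge 1$ in place of the asymptotic Lemma~\ref{lem:superfactorial-approx}. To get that bound exactly, start from $\log(i!)\ge i\log i-i+1$; the weaker $i\log i-i$ leaves a harmless extra $-m/2$. This version gives a constant uniform in both $d$ and $n$, and it avoids the loose handling of the constant $C$ in the paper's argument.
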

\begin{proof}
In this proof, we let $C \geq 1$ be an absolute constant (not depending on $d$ and $n$) that might change its value from place to place. By Lemma~\ref{lem:superfactorial-approx}, we have $-\log(\text{sf}(f(n))) \leq -\frac{1}{2} C f(n)^2 \log(f(n))$. It follows that
\begin{align*}
    \log\left(\frac{(3 d n)^{2 d f(n)}}{\text{sf}(f(n))^{1/4}}\right) 
    &= 2 d f(n) \log(3dn) - \frac{1}{4} \log(\text{sf}(f(n))) \\
    &\leq C \left(2 d f(n) \log(3dn) - \frac{1}{8} f(n)^2 \log(f(n))\right) \\
    &\leq C \left(2 d f(n) \log(3dn) - \frac{1}{8} f(n) \log(f(n)) \log(3dn) 2d \right) \\
    &= C \left(2 d f(n) \log(3dn) (1 - \frac{1}{8} \log(f(n)) \right).
\end{align*}
For large $n$, the term $1 - \frac{1}{8} \log(f(n))$ becomes negative due to the assumption that $f(n) \geq \log(3dn) 2d$, that is, $\lim_{n \to \infty} f(n) = \infty$. Therefore, taking the exponential on both sides in the above inequality concludes the proof.
\end{proof}

\begin{proof}[Proof of Theorem~\ref{thm:main-result}]
If $G$ is rationally identifiable and $d \geq \text{ID}_G$, it follows directly from Corollary~\ref{cor:degree} that Algorithm~\ref{alg:degree-bounded-identification} returns ``yes''. Moreover, it returns ``no'' if $G$ is not rationally identifiable due to Theorem~\ref{thm:id-with-gb}. Hence, it is left to show the statement about the complexity. Note that the number of directed and bidirected edges in a graph is each smaller than $\binom{|V|}{2} \leq |V|^2$, and that the maximum length $w_{\trek}$ of a trek in an acyclic graph $G$ is bounded above by $2|V|-1 \leq 2|V|$. It follows that, in Algorithm~\ref{alg:degree-bounded-identification}, we compute a degree-bounded Groebner basis at most 
\[
    d \, w_{\trek} \, |\theta|^2 = d \, w_{\trek} \,  (|D|+|B|+|V|)^2=  O(d \, |V|^5)
\]
times. Therefore, we have by Proposition~\ref{prop:deg-bounded-gb-complexity} that the worst case-complexity is given by
\[
O \left(d |V|^5 \frac{1}{(\prod_{u \leq v} w(\sigma_{uv}))^\alpha} \binom{2|V|^2+d w_{\trek}-1}{d w_{\trek}}^\alpha \right),
\]
where we used that the numbers of variables in the ring $\mathbb{R}[\lambda, \omega, \sigma, h]$ is given by 
\begin{align*}
    |D|+|B|+|V| + \binom{|V|+1}{2} + 1 &\leq \binom{|V|}{2} + \binom{|V|}{2} + |V| + \binom{|V|+1}{2} + 1\\
    &= |V|^2 + \binom{|V|+1}{2} + 1 \\
    &\leq 2 |V|^2.
\end{align*}
 Now, observe first that
\begin{align*}
\binom{2 |V|^2+d w_{\trek}-1}{d w_{\trek}} &= O(\{2 |V|^2+d w_{\trek}-1\}^{d w_{\trek}}) \\
&= O(\{2 |V|^2+d (2|V|+1)-1\}^{d w_{\trek}}) \\
&= O(\{3d|V|^2\}^{d w_{\trek}}) = O(\{3d|V|\}^{2d w_{\trek}}),
\end{align*}
where we have used that $d \geq 2$ in the input to Algorithm~\ref{alg:degree-bounded-identification}. If $w_{\trek} \leq 2d \log(3d |V|)$, then the worst case complexity of Algorithm~\ref{alg:degree-bounded-identification} is clearly given by 
\[
    O\left( d |V|^5 \{3d|V|\}^{4\alpha d^2\log(3d|V|)}\right).
\]
Similarly, If $w_{\trek} \leq C$ for an absolute constant $C$, then the worst case complexity of Algorithm~\ref{alg:degree-bounded-identification} is  given by 
$
    O\left( d |V|^5 \{3d|V|\}^{2 \alpha d C}\right).
$
The remaining case is where $w_{\trek} \geq 2d \log(3d |V|)$. But in this case, Lemma~\ref{lem:superfactorial} and Lemma~\ref{lem:helper} imply that the complexity of Algorithm~\ref{alg:degree-bounded-identification} is bounded by
$
    O(d|V|^5).
$ 
We conclude the proof by noting that both $d |V|^5 \{3d|V|\}^{2\alpha d C}$ and $d|V|^5$ are of the order of $d |V|^5 \{3d|V|\}^{4\alpha d^2\log(3d|V|)}$.
\end{proof}

\section{Proofs of Section~\ref{sec:comparison-to-existing-criteria}}
\begin{proof}[Proof of Proposition~\ref{prop:subsume-HTC}]
If a graph is HTC-identifiable, then the identifying polynomials for the parameters $\lambda$ are given as follows. Let $\pa(v)=\{p_1, \ldots, p_k\}$ be the set of parents of a node $v \in V$. Moreover, let  $Y^1_v, Y^2_v \subseteq V$ be  two disjoint sets such that $|Y^1_v|+|Y^2_v|=k$ and let $Y^1_v \sqcup Y^2_v  = \{y_1, \ldots, y_k\}$ be their union. Define a matrix $A$ and a vector $b$ as 
\[
    A_{ij} = \begin{cases}
        [(I-\Lambda)^{\top} \Sigma]_{y_i p_j} & \text{ if } y_i \in Y^1_v, \\
        \Sigma_{y_i p_j} & \text{ if } y_i \in Y^2_v,
    \end{cases}
    \quad 
    \text{and}
    \quad
    b_i = \begin{cases}
        [(I-\Lambda)^{\top} \Sigma]_{y_i v} & \text{ if } y_i \in Y^1_v, \\
        \Sigma_{y_i v} & \text{ if } y_i \in Y^2_v.
    \end{cases}
\]
For each $j \in [k]$, we consider the polynomials $\lambda_{p_j v} \det(A) - \det(A_j)$, where $A_j$ is formed by replacing the $j$-th column of $A$ by $b$. HTC-identifiability implies that there exist sets $Y^1_v, Y^2_v \subseteq V$ of the above form for each node $v \in V$ such that the polynomials $\lambda_{p_j v} \det(A) - \det(A_j)$ are identifying polynomials for all $\lambda_{p_j v}$ with respect to an identifying order $\prec$. In particular, the order ensures that the set $Y^1_v$ only contains nodes such that the matrix
$A$ and the vector $b$ only contain variables $\lambda_{w y}$ that precede $\lambda_{p v}$ with respect to $\prec$.  Now, observe that, for each $v \in V$, the entries of $A$ and $b$ are polynomials of  degree at most $2$, which implies by Assumption~\ref{ass:bound-parents} that $\det(A)$ and $\det(A_j)$ are polynomials of degree at most $2 \pa(v) \leq 2 s$.

To conclude the proof, we place all parameters $\omega_{uv}$ after the parameters $\lambda$ in the identifying order. By Remark~\ref{rem:lambdas-enough}, this yields identifying polynomials for all parameters $\omega_{uv}$ of degree at most $3 \leq 2s+1$.
\end{proof}

\begin{proof}[Proof of Proposition~\ref{prop:subsume-IC}]
IC-identifiability implies that there is an identifying order $\prec$ on the parameters $\lambda$. Consider a variable $\lambda_{xy} \in \lambda$ and let $\theta_{\id} = \{\lambda_{ij} \in \lambda: \lambda_{ij} \prec \lambda_{xy}\}$ be the preceding variables that are already certified to be rationally identifiable by the IC criterion. Let $\Lambda^{\id}$ be the matrix with entries $\Lambda^{\id}_{ij} = \lambda_{ij}$ if $\lambda_{ij} \in \theta_{\id}$ and $\Lambda^{\id}_{ij} = 0$ else. Then, the identifying polynomial for $\lambda_{xy}$ is of the following form.

Let $T \subseteq \pa(y) \setminus \{x\}$ be a subset of the parents of $y$, and denote $T \sqcup \{x\}=\{t_1, \ldots, t_k\}$. Moreover, let $S_1, S_2 \subseteq V$ be two disjoint sets of nodes such that $|S_1| + |S_2| = k$, and let $S_1 \sqcup S_2 = \{s_1, \ldots, s_k\}$ be their union. Define a matrix $A$ and a vector $b$ as 
\[
    A_{ij} = \begin{cases}
        [(I-\Lambda^{\id})^{\top} \Sigma]_{s_i t_j} & \text{ if } s_i \in S_1, \\
        \Sigma_{s_i t_j} & \text{ if } s_i \in S_2,
    \end{cases}
    \quad 
    \text{and}
    \quad
    b_i = \begin{cases}
        [(I-\Lambda^{\id})^{\top} \Sigma(I-\Lambda^{\id})]_{s_i y} & \text{ if } s_i \in S_1, \\
        [\Sigma(I-\Lambda^{\id})]_{s_i y} & \text{ if } s_i \in S_2.
    \end{cases}
\]
Recall that $x = t_j$ for some $j \in [k]$. We consider the polynomial $\lambda_{xy} \det(A) - \det(A_j)$, where $A_j$ is formed by replacing the $j$-th column of $A$ by $b$. IC-identifiability implies that there exist sets $S_1,S_2$ and $T$  of the above form such that the polynomial $\lambda_{xy} \det(A) - \det(A_j)$ is an identifying polynomial for $\lambda_{xy}$ with respect to $\prec$. 

It remains to observe that the entries of $A$ are polynomials of  degree at most $2$, and that the entries of $b$ are polynomials of  degree at most $3$. By Assumption~\ref{ass:bound-parents} we conclude that $\det(A)$ is a polynomial of degree at most $2 \pa(y) \leq 2 s$ and that 
$\det(A_j)$ is a polynomial of degree at most $2 \pa(y) + 1 \leq 2 s + 1$. Hence, the degree of the identifying polynomial $\lambda_{xy} \det(A) - \det(A_j)$ is at most $2s+1$.

To conclude the proof, we place all parameters $\omega_{uv}$ after the parameters $\lambda$ in the identifying order. By Remark~\ref{rem:lambdas-enough}, this yields identifying polynomials for all parameters $\omega_{uv}$ of degree at most $3 \leq 2s+1$.
\end{proof}

\begin{proof}[Proof of Proposition~\ref{prop:subsume-AC}]
AC-identifiability implies that there is an identifying order $\prec$ on the variables $\lambda$. Consider a variable $\lambda_{xy} \in \lambda$ and let $\theta_{\id} = \{\lambda_{ij} \in \lambda: \lambda_{ij} \prec \lambda_{xy}\}$ be the preceding variables that are already certified to be rationally identifiable by the AC criterion. Let $\Lambda^{\id}$ be the matrix with entries $\Lambda^{\id}_{ij} = \lambda_{ij}$ if $\lambda_{ij} \in \theta_{\id}$ and $\Lambda^{\id}_{ij} = 0$ else. 

Now, we will construct the  identifying polynomial for $\lambda_{xy}$. Let $T \subseteq \pa(y) \setminus \{x\}$ be a subset of the parents of $y$, and let $S=\{s_1, \ldots, s_k\} \subseteq V$ be another set of nodes such that $|S|=|T|+1=k$. Define the $k \times k$ matrix
\[
    [(I-\Lambda^{\id}-D)^{\top}\Sigma]_{S,T \cup \{x\}}.
\]
Here, for two sets $C,Z \subseteq V$ such that $|C|=|Z|$, the $|V| \times |V|$ matrix $D$ is defined  as
\[
    D_{v w} = \begin{cases}
    \frac{\det(\Sigma_{Z, (C\setminus\{v\}) \cup \{w\}}^{\ast})}{\det(\Sigma_{Z, C})} & \text{ if } v \in C, \\
    0 & \text{ else},
    \end{cases}
\]
where $\Sigma_{Z, (C\setminus\{v\}) \cup\{w\}}^{\ast})$ is the matrix $\Sigma_{Z, C}$ with the column indexed by $v$ replaced by $[\Sigma(I-\Lambda^{\id})]_{Z,w}$.
We consider the polynomial
\begin{equation} \label{eq:id-polynomial-AC}
    \lambda_{xy} \det([(I-\Lambda^{\id}-D)^{\top}\Sigma]_{S,T \cup \{x\}}) - \det([(I-\Lambda^{\id}-D)^{\top}\Sigma]_{S,T \cup \{y\}}^{\ast}), 
\end{equation}
where $[(I-\Lambda^{\id}-D)^{\top}\Sigma]_{S,T \cup \{y\}}^{\ast}$ is the matrix $[(I-\Lambda^{\id}-D)^{\top}\Sigma]_{S,T \cup \{x\}}$ with the column indexed by $x$ replaced by $[(I-\Lambda^{\id}-D)^{\top} \Sigma(I-\Lambda^{\id})]_{S, y}$.

AC-identifiability implies that there exist sets $S,T,C$ and $Z$  of the above form with $|C|=|Z|\leq s$ such that, after clearing denominators in~\eqref{eq:id-polynomial-AC} by multiplying with $\det(\Sigma_{Z, C})^s$, the resulting polynomial $g_{xy}$ is an identifying polynomial for $\lambda_{xy}$ with respect to $\prec$. It remains to bound the degree of $g_{xy}$. The polynomial $\det(\Sigma_{Z, C})^s$ has at most degree $s^2$ and the polynomial $\det(\Sigma_{Z, C}^{-v, +w})$ has at most degree $s+1$. It follows that the degree of $g_{xy}$ is at most $s^2 (s (s+2) + 1)= s^4+2s^3+s^2$.

To conclude the proof, we place all parameters $\omega_{uv}$ after the parameters $\lambda$ in the identifying order. By Remark~\ref{rem:lambdas-enough}, this yields identifying polynomials for all parameters $\omega_{uv}$ of degree at most $3 \leq s^4+2s^3+s^2$.
\end{proof}

\end{appendix}
\end{document}